\newcommand{\bbR}{\mathbb{R}}
\newcommand{\pdiag}{\textnormal{pdiag}}
\newcommand{\calA}{\mathcal{A}}
\newcommand{\calI}{\mathcal{I}}
\newcommand{\E}{\mathbb{E}}
\newcommand{\X}{\mathbf{x}}
\newcommand{\grad}{\nabla}
\newcommand{\C}{\mathcal{C}}
\newcommand{\std}[1]{\tiny{$\pm$ #1}}
\newcommand{\hideh}[1]{}
\newtheorem{lemma}{Lemma}
\newtheorem{theorem}{Theorem}
\newtheorem{proposition}{Proposition}
\title{Efficient Training of \\Low-Curvature Neural Networks}
\author{%
Suraj Srinivas$^{*1}$ \\
Harvard University \\
\texttt{ssrinivas@seas.harvard.edu}
\And
Kyle Matoba$^*$ \\
Idiap Research Institute \& EPFL \\
\texttt{kyle.matoba@epfl.ch}
\And 
Himabindu Lakkaraju \\
Harvard University \\
\texttt{hlakkaraju@hbs.edu}
\And
Fran\c{c}ois Fleuret \\ 
University of Geneva \\
\texttt{francois.fleuret@unige.ch}
}
\begin{document}

\maketitle

\def\thefootnote{*}\footnotetext{Equal Contribution}
\def\thefootnote{1}\footnotetext{Work done partially at Idiap Research Institute}

\begin{abstract}
Standard deep neural networks often have excess non-linearity, making them susceptible to issues such as low adversarial robustness and gradient instability. Common methods to address these downstream issues, such as adversarial training, are expensive and often sacrifice predictive accuracy.

In this work, we address the core issue of excess non-linearity via curvature, and 
demonstrate low-curvature neural networks (LCNNs) that obtain drastically lower curvature 
than standard models while exhibiting similar predictive performance. This leads to improved 
robustness and stable gradients, at a fraction of the cost of standard adversarial training. 
To achieve this, we decompose overall model curvature in terms of curvatures and slopes of 
its constituent layers. To enable efficient curvature minimization of constituent layers, 
we introduce two novel architectural components: first, a non-linearity called centered-softplus that is a stable variant of the softplus non-linearity, and second, a Lipschitz-constrained 
batch normalization layer.

Our experiments show that LCNNs have lower curvature, more stable gradients and increased 
off-the-shelf adversarial robustness when compared to standard neural networks, all without 
affecting predictive performance. Our approach is easy to use and can be readily incorporated 
into existing neural network architectures.

Code to implement our method and replicate our experiments is available at \url{https://github.com/kylematoba/lcnn}. 
\end{abstract}

\section{Introduction}
The high degree of flexibility present in deep neural networks is critical to achieving good performance in complex tasks such as image classification, language modelling and generative modelling of images \cite{he2016deep, chowdhery2022palm, ramesh2022hierarchical}. However, \emph{excessive} flexibility is undesirable as this can lead to model under-specification \cite{d2020underspecification} which results in unpredictable behaviour on out-of-domain inputs, such as vulnerability to adversarial examples. Such under-specification can be avoided in principle via Occam's razor, which requires training models that are as simple as possible for the task at hand, and not any more. Motivated by this principle, in this work we focus on training neural network models without excess non-linearity in their input-output map (e.g.: see Fig \ref{fig:LCNN}), such that predictive performance remains unaffected. 

Central to this work is a precise notion of curvature, which is a mathematical quantity 
that encodes the flexibility or the degree of non-linearity of a function at a point. 
In deep learning, the curvature of a function at a point is often quantified as the norm of the Hessian at that point \cite{Dinh2017, MoosaviDezfooli2018, Dombrowski2022}. Hessian norms are zero everywhere if and only if the function is linear, making them suitable to measure the degree of non-linearity. However, they suffer from a dependence on the scaling of model gradients, which makes them unsuitable to study its interplay with model robustness. In particular, robust models have small gradient norms \cite{hein2017formal}, which naturally imply smaller Hessian norms. But are they truly more linear as a result? To be able to study robustness independent of non-linearity, we propose \textit{normalized curvature}, which normalizes the Hessian norm by its corresponding gradient norm, thus disentangling the two measures. Surprisingly, we find that normalized curvature is a stable measure across train and test samples (see Table \ref{tab:discrepancy}), whereas the usual curvature is not.

A conceptually straightforward approach to train models with low-curvature input-output maps \cite{MoosaviDezfooli2018, qin2019adversarial} involves directly penalizing curvature locally at training samples. However, these methods involve expensive Hessian computations, and only minimize local point-wise curvature and not curvature everywhere. A complementary approach is that of \citet{Dombrowski2019}, who propose architectures that have small global curvature, but do not explicitly penalize this curvature during training. In contrast, we propose efficient mechanisms to explicitly penalize the \textit{normalized curvature} globally. In addition, while previous methods \cite{MoosaviDezfooli2018, Dombrowski2022} penalize the Frobenius norm of the Hessian, we penalize its spectral norm, thus providing tighter and more interpretable robustness bounds.\footnote{Note that the Frobenius norm of a matrix strictly upper bounds its spectral norm}



\begin{figure}
  \centering
    \begin{subfigure}{0.27\textwidth}
      \includegraphics[width=\textwidth,clip, trim=2cm 2cm 2cm 2cm]{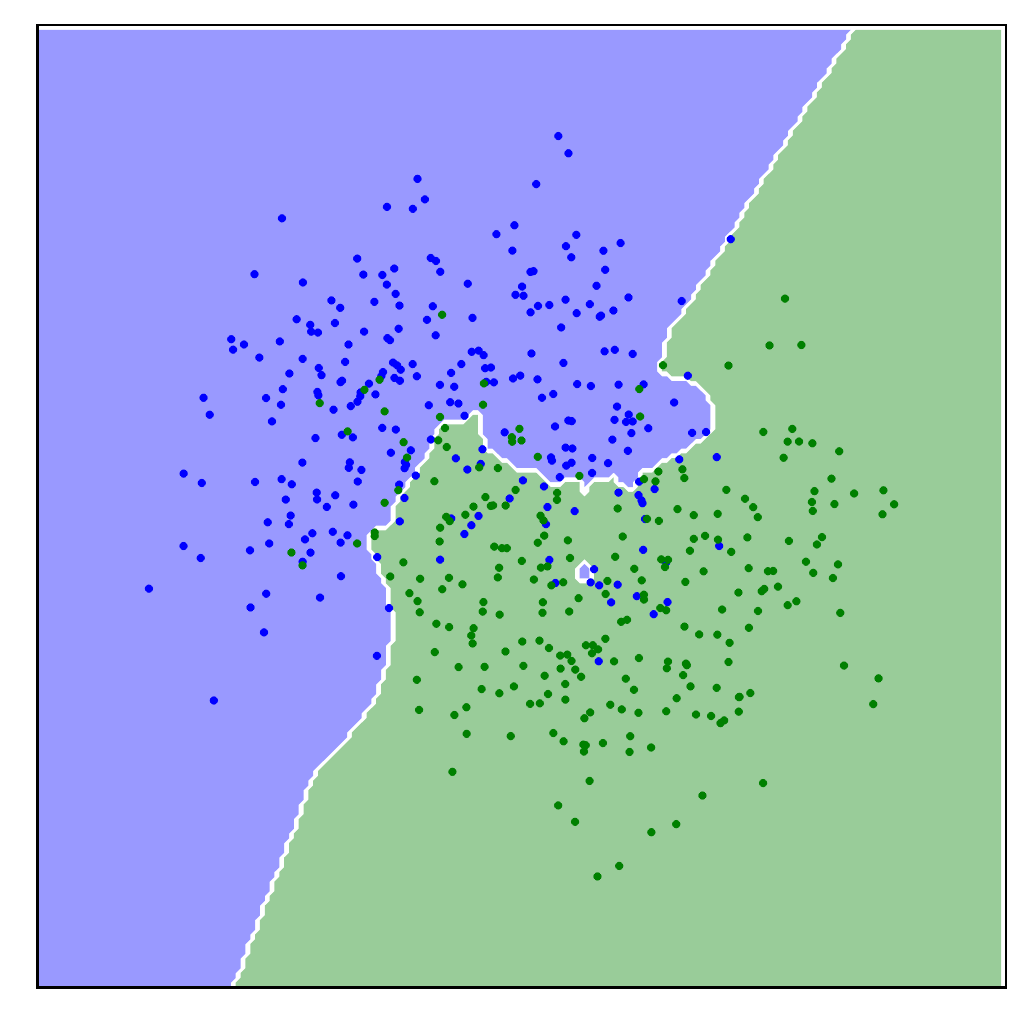}
      \caption{Decision boundary with standard NN ($\beta=1000$)}
      \label{fig:standardNN}
    \end{subfigure}\hspace{3ex}
  \begin{subfigure}{0.27\textwidth} 
      \centering 
       \includegraphics[width=\textwidth,clip, trim=2cm 2cm 2cm 2cm]{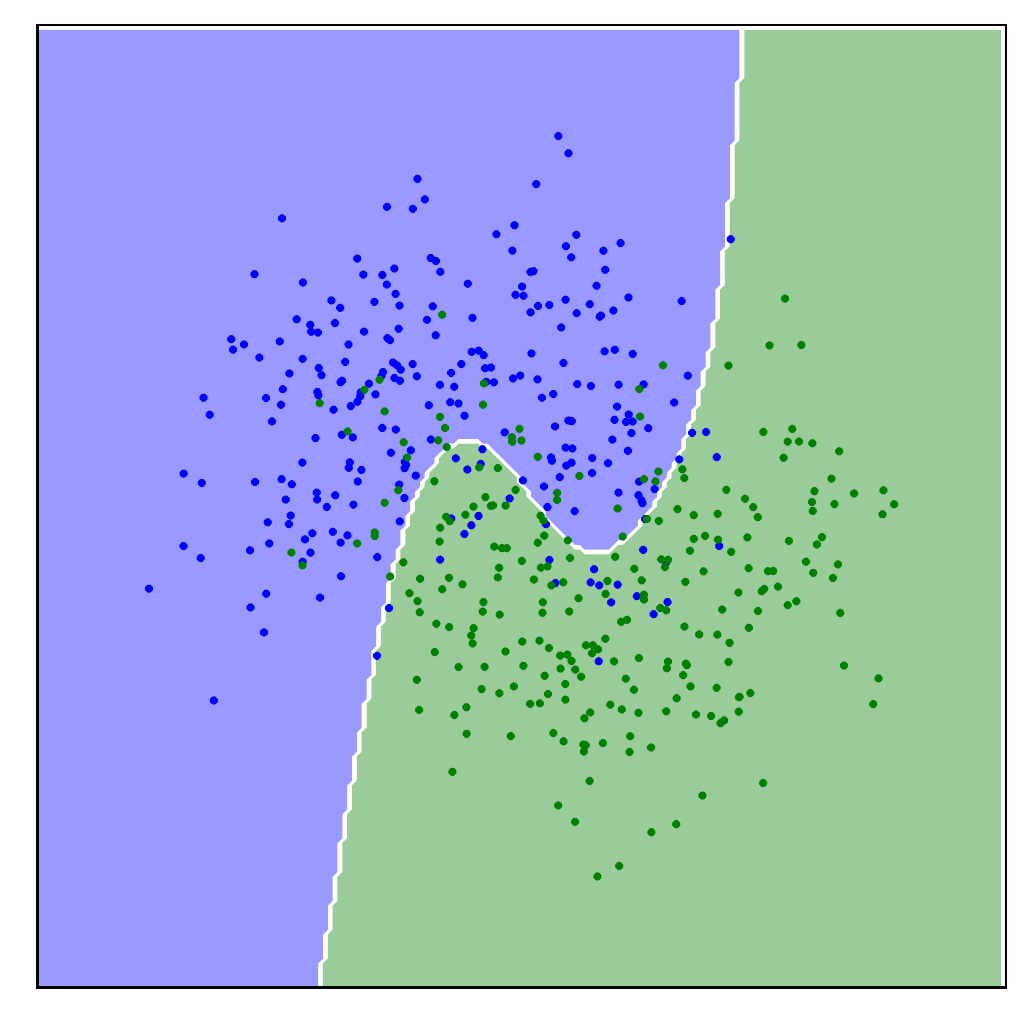}
       \caption{Decision boundary with LCNN ($\beta=1$)}
       \label{fig:LCNN}
     \end{subfigure}\hfill
  \begin{subfigure}{0.35\textwidth}
    \includegraphics[width=0.95\textwidth, clip, trim=2ex 0 1ex 1ex]{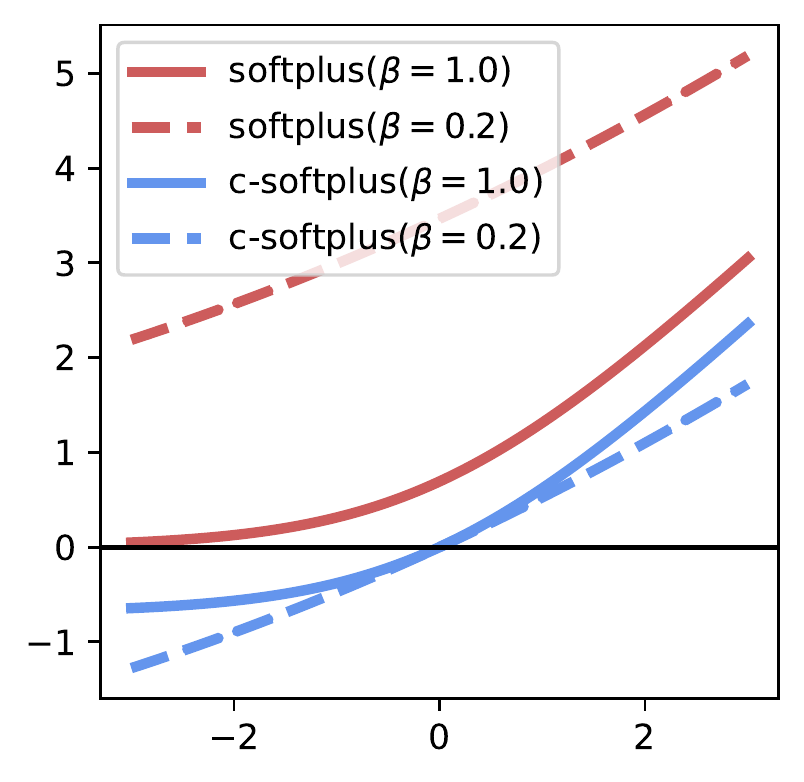}
    \vspace*{-2ex}
    \caption{Softplus vs Centered-Softplus}
    \label{fig:softplus}
  \end{subfigure} 
    \caption{Decision boundaries of (a) standard NN and (b) LCNN trained on the two moons dataset. LCNN recovers highly regular decision boundaries in contrast to the standard NN.
    (c) Comparison of softplus and centered-softplus non-linearities (defined in \S \ref{sec:softplus}). These behave similarly for large $\beta$ values, and converge to linear maps for low $\beta$
    values. However, softplus diverges while centered-softplus stays close to the origin.
    }
\end{figure}

Our overall contributions in this paper are:

\begin{enumerate}
  \item In \S \ref{sec:curvaturedef} we propose to measure curvature of deep models via \emph{normalized curvature}, which is invariant to scaling of the model gradients.
  \item In \S \ref{sec:datafree} we show that normalized curvature of neural networks can be upper bounded in terms of normalized curvatures and slopes of individual layers.
  \item We introduce an architecture for training LCNNs combining a novel activation function called the centered-softplus (\S \ref{sec:softplus}), and recent innovations to constrain the Lipschitz constant of convolutional (\S \ref{sec:convolution}) and batch normalization (\S \ref{sec:batchnorm}) layers.
  \item In \S \ref{sec:properties} we prove bounds on the relative gradient robustness and adversarial robustness
  of models in terms of normalized curvature, showing that controlling normalized curvature directly controls these properties.
  \item In \S \ref{sec:expts}, we show experiments demonstrating that our proposed innovations are successful in training low-curvature models without sacrificing training accuracy, and that such models have robust gradients, and are more robust to adversarial examples out-of-the-box.
\end{enumerate}

\section{Related Work}
\textbf{Adversarial Robustness of Neural Networks:} 
The well-known phenemenon of adversarial vulnerabilities of neural networks 
\citep{Szegedy2014, Goodfellow2014_adv} shows that adding small amounts of 
imperceptible noise can cause deep neural networks to misclassify points with 
high confidence. The canonical method to defend against this vulnerability is 
adversarial training \citep{madry2018towards} which trains models to 
accurately classify adversarial examples generated via an attack 
such as projected gradient descent (PGD). However, this approach is computationally 
expensive and provides no formal guarantees on robustness. \citet{cohen2019certified} 
proposed randomized smoothing, which provides a formal guarantee on robustness 
by generating a smooth classifer from any black-box classifier. 
\citet{hein2017formal} identified the local Lipschitz constant as critical quantity 
to prove formal robustness guarantees. 
\citet{MoosaviDezfooli2018} penalize the Frobenius norm of the Hessian, and show that they 
performs similarly to models trained via adversarial training.
\citet{qin2019adversarial} introduce a local linearity regularizer, which also 
implicitly penalizes the Hessian. Similar to these works, we enforce low curvature to induce robustness, but we 
focus on out of the box robustness of LCNNs.

\textbf{Unreliable Gradient Interpretations in Neural Networks:} Gradient explanations in neural networks
can be unreliable. \citet{Ghorbani2019, zhang2020interpretable} showed 
that for any input, it is possible to find adversarial inputs such that the gradient explanations for 
these points that are highly dissimilar to each other. \citet{srinivas2020rethinking} showed that pre-softmax logit gradients however are independent of model 
behaviour, and as a result we focus on post-softmax loss gradients in this work. 
\citet{Ros2018} showed empirically that robustness can be improved by gradient regularization, however \citet{Dombrowski2019} showed that gradient instability is primarily due to large Hessian norms. 
This suggests that the gradient penalization in \citet{Ros2018} performed 
unintentional Hessian regularization, which is consistent with our experimental results.
To alleviate this, \citet{Dombrowski2022} proposed to train low curvature models via softplus
activations and weight decay, broadly similar to our strategy. 
However, while \citet{Dombrowski2022} focused on the Frobenius norm of the Hessian, 
we penalize the normalized curvature -- a scaled version of the Hessian spectral norm -- 
which is strictly smaller than the Frobenius norm, and this results in a more sophisticated penalization strategy. 

\textbf{Lipschitz Layers in Neural Networks} There has been extensive work on methods to 
bound the Lipschitz constant of neural networks. \citet{Cisse2017} introduced 
Parseval networks, which penalizes the deviation of linear layers from orthonormality -- 
since an orthonormal linear operator evidently has a Lipschitz constant of one, 
this shrinks the Lipschitz constant of a layer towards one. \citet{Trockman2021} use a 
reparameterization of the weight matrix, called the Cayley Transform, that is orthogonal 
by construction. \citet{Miyato2018, Ryu2019} proposed 
spectral normalization, where linear layers are re-parameterized by dividing by their 
spectral norm, ensuring that the overall spectral norm of the parameterized layer is one. 

\section{Training Low-Curvature Neural Networks}
In this section, we introduce our approach for training low-curvature neural nets
(LCNNs). Unless otherwise specified, we shall consider a neural network classifier $f$ that maps inputs
$\X \in \bbR^d$ to logits which characterize the prediction, and can be further combined with the true label distribution and a loss function to give a scalar loss value $f(\X) \in \bbR^+$.   

\subsection{Measuring Relative Model Non-Linearity via Normalized Curvature}
\label{sec:curvaturedef}
We begin our analysis by discussing a desirable definition of curvature
$\C_f(\X) \in \bbR^+$. While curvature is well-studied topic in differential geometry \cite{lee2006riemannian} where Hessian normalization is a common theme, our discussion will be largely independent of this literature, 
in a manner more suited to requirements in deep learning. Regardless of the definition, 
a typical property of a curvature measure is that 
$\C_f(\X) = 0 ~~\forall~ \X \in \bbR^d \iff f$ is linear, and the higher the curvature, the further from linear the function is. Hence $\max_{\X \in \bbR^d} \C_f(\X)$ can be thought of as a measure of a model's non-linearity. A common way to define curvature in
machine learning \cite{Dinh2017, MoosaviDezfooli2018, Dombrowski2022} has been via Hessian norms. 
However, these measures are sensitive to gradient scaling, which is undesirable. After all, the degree of model non-linearity must intuitively be independent of how large its gradients are.

For example, if two functions $f,g$ are scaled ($f = k \times g$), or rotated 
versions of each other ($\grad f = k \times \grad g$), then we would like them to have similar 
curvatures in the interest of disentangling curvature (i.e., degree of non-linearity) from scaling. 
It is easy to see that Hessian norms $ \| \nabla^2_{\X} f(\X) \|_2$ do not have this property, as scaling the function also scales the corresponding Hessian. We would like to be able to recover low-curvature models with
steep gradients (that are non-robust), precisely to be able to disentangle their properties from low-curvature models
with small gradients (that are robust), which Hessian norms do not allow us to do. 

To avoid this problem, we propose a definition of curvature that is approximately \textit{normalized}: $\C_f(\X) = \| \grad^2 f(\X) \|_2 / (\| \grad f(\X) \|_2 + \varepsilon)$. Here $\| \grad f(\X) \|_2$ and $\| \grad^2 f(\X) \|_2$ are the $\ell_2$ norm of the gradient and the spectral norm of the Hessian, respectively, where $ \grad f(\X) \in \bbR^d, \grad^2 f(\X) \in \bbR^{d \times d}$, and $\varepsilon > 0$ is a small constant to ensure well-behavedness of the measure. This definition measures Hessian norm \textit{relative} to the gradient norm, and captures a notion of relative local linearity, which can be seen via an application of Taylor's theorem:

\begin{equation}
  \underbrace{\frac{\| f(\X + \epsilon) - f(\X) - \grad f(\X)^\top \epsilon \|_2}{ \| \grad f(\X) \|_2 } }_{\text{relative local linearity}} \leq \frac{1}{2} \underbrace{\max_{\X \in \bbR^d} \C_f(\X)}_{\text{normalized curvature}} \| \epsilon \|_2. 
\end{equation}

Here the numerator on the left captures the local linearity error, scaled by the gradient norm in the denominator,
which can be shown using Taylor's theorem to be upper bounded by the normalized curvature. We shall consider penalizing this notion of normalized curvature, and we shall simply refer to this quantity as curvature in the rest of the paper.

\subsection{A Data-Free Upper Bound on Curvature}
\label{sec:datafree}

Directly penalizing the curvature is computationally expensive as
it requires backpropagating an estimate of the Hessian norm, which 
itself requires backpropagating gradient-vector products. This requires 
chaining the entire computational graph of the model at least three times. 
\citet{MoosaviDezfooli2018} reduce the complexity of this operation by computing a finite-difference approximation to the Hessian from gradients, but even this double-backpropagation is expensive. We require an efficient penalization procedure takes a single backpropagation step.

To this end, we propose to minimize a data-free upper bound on the curvature. To illustrate the idea, we first show this upper bound for the simplified case of the composition of one-dimensional functions ($f: \bbR \rightarrow \bbR$).

\begin{lemma}
Given a 1-dimensional compositional function $f = f_L \circ f_{L-1}\circ ...\circ f_1$ with 
$f_i: \bbR \rightarrow \bbR$ for $i = 1, 2, \hdots, L$, the normalized curvature 
$\C_f := |f'' / f'|$, is bounded by
$\left|\frac{f''}{f'}\right| \leq \sum_{i=1}^{L} \left|\frac{f_i''}{f_i'}\right| ~ \prod_{j=1}^{i} |f'_j|$
\end{lemma}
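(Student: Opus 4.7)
The plan is to prove the bound by induction on the depth $L$, using the chain rule to turn the normalized curvature of a composition into an additive quantity.

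First I would do the base case $L=2$ in closed form. For $f = g \circ h$, the chain rule gives $f' = (g' \circ h)\, h'$ and $f'' = (g'' \circ h)\, (h')^{2} + (g' \circ h)\, h''$. Dividing yields the key identity
$$\frac{f''}{f'} \;=\; \frac{g''}{g'}\!\circ h \;\cdot\; h' \;+\; \frac{h''}{h'},$$
which is the scalar analogue of the "slope times outer curvature plus inner curvature" decomposition the paper is after. The triangle inequality immediately gives the $L=2$ bound.

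Next I would iterate. Let $g_{i} := f_{i} \circ f_{i-1} \circ \cdots \circ f_{1}$ denote the partial composition (so $g_{L} = f$ and $g_{0} = \mathrm{id}$), and observe that $g_{i} = f_{i} \circ g_{i-1}$. Applying the $L=2$ identity with outer function $f_{i}$ and inner function $g_{i-1}$ gives the recursion
$$\frac{g_{i}''}{g_{i}'} \;=\; \frac{f_{i}''}{f_{i}'}\!\circ g_{i-1} \;\cdot\; g_{i-1}' \;+\; \frac{g_{i-1}''}{g_{i-1}'}.$$
Unrolling this recursion yields the closed form
$$\frac{f''}{f'} \;=\; \sum_{i=1}^{L} \frac{f_{i}''}{f_{i}'}\!\circ g_{i-1} \;\cdot\; g_{i-1}',$$
where $g_{i-1}' = \prod_{j=1}^{i-1} (f_{j}' \circ g_{j-1})$ by the ordinary chain rule (with the empty product equal to $1$ when $i=1$). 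Taking absolute values and applying the triangle inequality produces the claimed upper bound in terms of local normalized curvatures $|f_i''/f_i'|$ times products of slopes $|f_j'|$.

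The only real care needed is bookkeeping: keeping straight that each factor $f_{j}'$ and each ratio $f_{i}''/f_{i}'$ is evaluated at the appropriate forward-pass activation $g_{j-1}(x)$, and handling the empty-product convention at $i=1$. I do not expect a genuine obstacle in the scalar case; the substantive difficulty is deferred to the multidimensional extension, where Jacobians and Hessian tensors replace scalar derivatives and one has to commit to compatible operator norms before the same telescoping argument will go through.
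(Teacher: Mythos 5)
Your proof is correct and is essentially the paper's own argument: the paper differentiates $f' = \prod_{i=1}^{L} f_i'$ directly and divides by $f'$, which produces exactly the closed-form identity your two-layer decomposition yields after unrolling, followed by the same triangle-inequality step. One detail your derivation makes visible: the exact identity carries the slope product $\prod_{j=1}^{i-1}|f_j'|$ (empty for $i=1$), whereas the lemma as stated (and the paper's displayed formula for $f''$) has $\prod_{j=1}^{i}|f_j'|$ --- an off-by-one factor of $|f_i'|$ that does not affect the qualitative conclusion but means the stated bound does not literally follow without assuming $|f_i'|\ge 1$.
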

\vspace*{-1ex}
\textit{Proof.}
We first have $f' = \prod_{i=1}^{L} f'_i$. Differentiating this expression we have 
$f'' = \sum_{i=1}^L f_i'' \prod_{j=1}^i f_j' \prod_{k=1, k \neq i}^L f'_k$. 
Dividing by $f'$, taking absolute value of both sides, and using the triangle 
inequality, we have the intended result. 

Extending the above expression to functions $\bbR^n \rightarrow \bbR$ is not
straightforward, as the intermediate layer Hessians are order three tensors. We derive this 
using a result from \citet{Wang2017} that connects the spectral norms of order-$n$ tensors
to the spectral norm of their matrix ``unfoldings''. The full derivation is presented in the appendix, 
and the (simplified) result is stated below.

\begin{theorem}
\label{thm:main}
Given a function $f = f_L \circ f_{L-1} \circ \hdots \circ f_1$ with 
$f_i: \bbR^{n_{i-1}} \rightarrow \bbR^{n_i}$, the curvature $\C_f$ can be bounded by 
the sum of curvatures of individual layers $\C_{f_i}(\X)$, i.e.,
\begin{equation}
\label{eqn:curvature}
  \C_f(\X) \leq \sum_{i=1}^{L}n_i \times \C_{f_i}(\X) ~ \prod_{j=1}^{i} \| \nabla_{f_{j-1}    } f_j(\X) \|_2 
         \leq \sum_{i=1}^{L} n_i \times \max_{\X'} \C_{f_i}(\X') ~ \prod_{j=1}^{i} \max_{\X'} \| \nabla_{f_{j-1}} f_j(\X') \|_2. 
\end{equation}
\end{theorem}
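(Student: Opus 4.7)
The plan is to lift the one-dimensional chain-rule identity underlying the preceding lemma to the vector-valued compositional setting, and then to control the resulting order-three tensor contractions via the matrix-unfolding spectral-norm inequality of \citet{Wang2017}.

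First I would introduce the partial compositions $g_i := f_i \circ \hdots \circ f_1$, so that the Jacobians satisfy the recursion $J_{g_i} = J_{f_i}(g_{i-1}(\X))\,J_{g_{i-1}}(\X)$. Differentiating this once more with respect to $\X$ and applying the product rule yields a recursion for $\nabla^2 g_i$ with two contributions: a pass-through piece $J_{f_i}\nabla^2 g_{i-1}$ that carries the Hessian from layers below, and a fresh term obtained by contracting the order-three Hessian tensor $\nabla^2 f_i$ with $J_{g_{i-1}}$ on both of its input slots. Unrolling the recursion from $i=L$ down to $i=1$ writes $\nabla^2 f(\X)$ as a telescoping sum of $L$ terms, the $i$th of which has the form $\bigl(\prod_{k>i}J_{f_k}\bigr)\cdot(\nabla^2 f_i)[J_{g_{i-1}},J_{g_{i-1}}]$, the direct analogue of the identity $f'' = \sum_i f_i''\,\prod_{j<i}f_j'\,\prod_{k\neq i}f_k'$ that drives the one-dimensional lemma.

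Next I would bound the spectral norm of each summand. The crux is that $\nabla^2 f_i$ is an order-three tensor of shape $n_i \times n_{i-1} \times n_{i-1}$ whose spectral norm as a trilinear form must be controlled. I would invoke the matrix-unfolding inequality of \citet{Wang2017}, which bounds the full tensor operator norm by a factor $n_i$ (the size of the distinguished output mode) times the spectral norm of a chosen mode-$n_i$ unfolding. The two input-mode contractions with $J_{g_{i-1}}$ then contribute one factor $\prod_{j<i}\|J_{f_j}\|_2$ per contraction, while the outer backward chain contributes $\prod_{k>i}\|J_{f_k}\|_2$; assembling these factors gives a per-layer bound of the form $n_i\,\|\nabla^2 f_i\|_2\,\bigl(\prod_{j<i}\|J_{f_j}\|_2\bigr)^{2}\prod_{k>i}\|J_{f_k}\|_2$ on the $i$th summand.

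Finally I would form the normalized curvature $\C_f(\X)$ by dividing by $\|\nabla f(\X)\|_2$. Using the identification $\|\nabla^2 f_i\|_2 / \|J_{f_i}\|_2 = \C_{f_i}(\X)$ (interpreting the Jacobian spectral norm as the natural vector-valued analogue of the gradient norm in the Section~\ref{sec:curvaturedef} definition), and noting that $\|\nabla f(\X)\|_2$ arises as the product $J_1^\top \cdots J_L^\top$ applied to the scalar output direction and therefore naturally absorbs one copy of the forward Jacobian product together with the backward chain, one recovers the first inequality with a single product $\prod_{j=1}^{i}\|J_{f_j}\|_2$ surviving alongside $n_i\,\C_{f_i}(\X)$. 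The second inequality is then immediate on replacing each pointwise spectral norm and curvature by its supremum over $\X'$.

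The principal obstacle is the tensor-norm bookkeeping in the second step: one must choose the matrix unfolding of $\nabla^2 f_i$ so that the $n_i$ prefactor from \citet{Wang2017} is as tight as possible and so that the two Jacobian contractions on the input modes align cleanly with the remaining bilinear form, and one must verify that the gradient-norm denominator absorbs exactly one copy of the Jacobian product rather than leaving behind a spurious squared factor. These steps are mechanical but error-prone, and the overall structure of the argument mirrors the proof of the one-dimensional lemma nearly term-by-term.
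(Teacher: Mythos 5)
Your decomposition of $\nabla^2 f$ into a telescoping sum of $L$ increments, each of the form $\bigl(\prod_{k>i}J_{f_k}\bigr)\cdot(\nabla^2 f_i)[J_{g_{i-1}},J_{g_{i-1}}]$, matches the paper's Equation for the network Hessian, and the appeal to the unfolding bound of \citet{Wang2017} is also the right tool. The gap is exactly at the step you flag as needing verification, and as you have set it up it does not go through. After you bound the $i$th summand by $n_i\,\|\nabla^2 f_i\|_2\,\bigl(\prod_{j<i}\|J_{f_j}\|_2\bigr)^{2}\prod_{k>i}\|J_{f_k}\|_2$ — i.e.\ after you have already separated every factor into its own spectral norm — dividing by $\|\nabla f(\X)\|_2$ cannot "absorb one copy of the forward Jacobian product together with the backward chain." The denominator is the norm of the \emph{product} of Jacobians, $\|\prod_j J_{f_j}\|_2$, which satisfies $\|\prod_j J_{f_j}\|_2 \le \prod_j\|J_{f_j}\|_2$, and the inequality points the wrong way: to cancel a squared forward factor against the denominator you would need $\prod_{j\neq i}\|J_{f_j}\|_2 \le \|\prod_j J_{f_j}\|_2$, which is false in general and can fail by an arbitrarily large factor. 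So your per-summand bound, once formed, cannot be converted into the claimed $n_i\,\C_{f_i}(\X)\prod_{j\le i}\|J_{f_j}\|_2$.

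The paper avoids this by \emph{not} separating all the norms before dividing. Writing the increment as the covariant multilinear product $(\nabla^2 f_i)\bigl(\nabla f_{1,i-1},\,\nabla f_{i+1,L}^\top,\,\nabla f_{1,i-1}\bigr)$ and exploiting the diagonal structure of the activation Hessian ($\nabla^2 f_i = \pdiag3(\beta_i\sigma(\beta_i x)(1-\sigma(\beta_i x)))$, so that $\nabla f_i$ can be split off onto one mode), the unfolding bound is applied so that the second-mode argument stays grouped with one copy of the forward chain as a single matrix: $\|\nabla f_{i+1,L}^\top\,\nabla f_i\,\nabla f_{1,i-1}\|_2 = \|\nabla f\|_2$ \emph{exactly}. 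That copy of $\|\nabla f\|_2$ then cancels identically against the denominator of $\C_f$, leaving only the single surviving factor $\|\nabla f_{1,i-1}\|_2 \le \prod_{j<i}\|\nabla f_j\|_2$ alongside $n_i\,\C_{f_i}$. To repair your argument you must perform the division before splitting the middle contraction into separate norms, and you will also need the diagonal (or otherwise factorable) structure of $\nabla^2 f_i$ to move one Jacobian factor into the mode that reconstitutes $\nabla f$; a fully generic trilinear-form bound that treats the two input slots symmetrically will always leave the spurious squared factor behind.
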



The rightmost term is independent of $\X$ and thus holds uniformly across all data points. 
This bound shows that controlling the curvature and Lipschitz constant of each layer of a neural
network enables us to control the overall curvature of the model. 

Practical neural networks typically consist of\footnote{We ignore self-attention layers in this work.} linear maps such as convolutions, fully connected layers, and batch normalization layers, and non-linear activation functions. Linear maps have zero curvature by definition, and non-linear layers often have bounded
gradients ($=1$), which simplifies computations. In the sections that follow, we shall see how to penalize the remaining terms, i.e., the curvature of
the non-linear activations, and the Lipschitz constant of the linear layers. 

\subsection{Centered-Softplus: Activation Function with Trainable Curvature}
\label{sec:softplus}

Theorem \ref{thm:main} shows that the curvature of a neural network depends 
on the curvature of its constituent activation functions. 
We thus propose to use activation functions with minimal curvature.

The kink at the origin of the ReLU function implies an undefined second derivative. 
On the other hand, a smooth activation such as the 
softplus function, $s(x; \beta) = \log(1 + \exp(\beta x)) / \beta$ 
is better suited to analyzing questions of curvature. 
Despite not being a common baseline choice, softplus does see regular use, especially where its smoothness 
facilitates analysis \cite{Grathwohl2018}. The curvature of the softplus function is

\begin{equation}
\C_{s(\cdot; \beta)}(x) = \beta \times \left(1 - \frac{\mathrm{d}s(x; \beta)}{\mathrm{d}x}\right) \leq \beta
\end{equation}

Thus using softplus with small $\beta$ values ensures low curvature.
However, we observe two critical drawbacks of softplus preventing its usage 
with small $\beta$: 
(1) divergence for small $\beta$, where $s(x; \beta \rightarrow 0) = \infty$ which ensures that well-behaved low curvature maps cannot be recovered and,
(2) instability upon composition i.e., 
$s^n(x=0 ; \beta) = \underbrace{s \circ s \circ ... \circ s}_{n~\text{times}}(x = 0; \beta) = \frac{\log(n + 1)}{\beta}$.
Thus we have that $s^n(x=0; \beta) \rightarrow \infty$ as $n \rightarrow \infty$, which shows that composing softplus 
functions exacerbates instability around the origin. This is critical for deep models with large number of layers $n$, which is 
precisely the scenario of interest to us. To
remedy this problem, we propose \textit{centered-softplus} $s_0(x; \beta)$, a simple modification to softplus by introducing a normalizing term as follows.  

\begin{align}
\label{eqn:def_s0}
s_0(x;\beta) = s(x;\beta) - \frac{\log 2}{\beta} = \frac{1}{\beta} \log \left( \frac{1 + \exp(\beta x)}{2} \right). 
\end{align}

This ensures that $s(x=0; \beta) = s^n(x=0 ; \beta) = 0$ for any positive integer $n$, and hence also ensures stability upon composition. More importantly, we have $s_0(x; \beta \rightarrow 0) = x / 2$ which is a scaled linear map, while still retaining $s_0(x; \beta \rightarrow \infty) =
\text{ReLU}(x)$. This ensures that we are able to learn both well-behaved linear maps, as well as highly
non-linear ReLU-like maps if required.

We further propose to cast $\beta$ is a learnable parameter and penalize its value, hence directly penalizing the curvature of that layer. Having accounted for the curvature of the non-linearities, the next section discusses controlling the gradients of the linear layers.

\subsection{Lipschitz Linear Layers}

Theorem \ref{thm:main} shows that penalizing the Lipschitz constants of the 
constituent linear layers in a model is necessary to penalize overall model curvature. 
There are broadly three classes of linear layers we consider: convolutions, 
fully connected layers, and batch normalization. 

\subsubsection{Spectrally Normalized Convolutions and Fully Connected Layers}
\label{sec:convolution}
We propose to penalize the Lipschitz constant of convolutions and fully connected layer via existing spectral normalization-like techniques. For fully connected layers, we use vanilla spectral normalization (\citet{Miyato2018}) which  controls the spectral norm of a fully connected layer by reparameterization -- replacing a weight matrix 
$W$, by $W / ||W||_2$ which has a unit spectral norm. 
 \citet{Ryu2019} generalize this to convolutions by presenting a power iteration method that works directly on the linear mapping implicit in a convolution: maintaining 3D left and right singular ``vectors'' 
of the 4D tensor of convolutional filters and developing the corresponding update rules. They call this 
``real'' spectral normalization to distinguish it from the approximation that \cite{Miyato2018} proposed 
for convolutions. Using spectral normalization on fully connected layers and ``real'' spectral normalization 
on convolutional layers ensures that the spectral norm of these 
layers is exactly equal to one, further simplifying the bound in Theorem \ref{thm:main}.

\subsubsection{$\gamma$-Lipschitz Batch Normalization}
\label{sec:batchnorm}
In principle, at inference time batch normalization (BN) is multiplication by the diagonal matrix of
inverse running standard deviation estimates.\footnote{We ignore the learnable parameters of batch normalization for simplicity. The architecture we propose subsequently also does not have trainable affine parameters.}.
Thus we can spectrally normalize the BN layer by computing the reciprocal of the smallest
running standard deviation value across dimensions $i$, i.e., $||\text{BN}||_2 = \max_{x} ||\text{BN}(x)||_2 
= 1 / \min_i \text{running-std}(i)$. In practice, we found that models
with such spectrally normalized BN layers tended to either diverge or fail to train
in the first place, indicating that the scale introduced
by BN is necessary for training. To remedy this, we introduce the
$\gamma$-\textit{Lipschitz} batch normalization, defined by

\begin{align*}
\text{1-Lipschitz-BN}(\X) &\leftarrow \text{BN}(\X) / || \text{BN} ||_2\\
\gamma \text{-Lipschitz-BN}(\X) &\leftarrow \underbrace{\min (\gamma, || \text{BN}||_2)}_{\text{scaling factor} \leq \gamma} \times {\text{1-Lipschitz-BN}(\X)}.
\end{align*}

By clipping the scaling above at $\gamma$ (equivalently, the running standard deviation below, at $1 / \gamma$), we can ensure that the Lipschitz constant of a batch normalization layer is at most equal to $\gamma \in \bbR^+$. We provide a PyTorch-style code snippet in the appendix. 
As with $\beta$, described in \S \ref{sec:softplus}, we cast $\gamma$ as a learnable parameter in order to penalize it during training. \citet{gouk2021regularisation} proposed a similar (simpler) solution, whereas they they fit a common $\gamma$ to all BN layers,  we let $\gamma$ vary freely by layer. 

\subsection{Penalizing Curvature}
We have discussed three architectural innovations -- centered-softplus activations with a 
trainable $\beta$, spectrally normalized linear and convolution layers and a $\gamma$-Lipschitz 
batch normalization layer. We now discuss methods to penalize the overall curvature of a model built with these layers. 


Since convolutional and fully connected layers have spectral norms equal to one by construction, 
they contribute nothing to curvature. Thus, we restrict attention to batch normalization and activation layers. The set of which will be subsequently referred to respectively $\beta\textnormal{SP}$ and $\gamma\textnormal{BN}$. For models with batch normalization, naively using the upper bound 
in Theorem \ref{thm:main} is problematic due to the exponential growth in the product of Lipschitz constants of batch normalization layers. To alleviate this, we propose to use 
a penalization $\mathcal{R}_f$ where $\gamma_i$ terms are aggregated additively across batch normalization
layers, and independent of the $\beta_i$ terms in the following manner: 

\begin{align}
\mathcal{R}_f = \lambda_{\beta} \times \sum_{i \in \beta\textnormal{SP}} \beta_i + 
      \lambda_{\gamma} \times \sum_{j\in \gamma\text{BN}} \log \gamma_j 
\end{align}

An additive aggregation ensures that the penalization is well-behaved during training and does not 
grow exponentially. Note that the underlying model is necessarily linear if the penalization term is zero,
thus making it an appropriate measure of model non-linearity. Also note that $\beta_i \geq 0, \gamma_i \geq 1$ by construction. We shall henceforth use
the term ``LCNN'' to refer to a model trained with the proposed architectural components (centered-softplus, spectral
normalization, and $\gamma$-Lipschitz batch normalization) and associated regularization terms on
$\beta, \gamma$. We next discuss the robustness and interpretability benefits that we obtain with LCNNs.

\section{Why Train Low-Curvature Models?}\label{sec:properties}
In this section we discuss the advantages that low-curvature models offer, particularly as it pertains to robustness and gradient stability. These statements apply not just to LCNNs, but low-curvature models in general obtained via any other mechanism.

\subsection{Low Curvature Models have Stable Gradients}\label{sec:gradrobustness}
Recent work \citep{Ghorbani2019, zhang2020interpretable} has shown that gradient explanations are manipulable, and that we can easily find inputs whose explanations differ maximally from 
those at the original inputs, making them unreliable in practice for identifying important features. As we shall show, 
this amounts to models having a large curvature. In particular, we show that the relative gradient difference 
across points $\| \epsilon \|_2$ away in the $\ell_2$ sense is upper bounded by the normalized curvature $\C_f$,
as given below.

\begin{proposition}
  \label{prop:gradrobust}
  Consider a model $f$ with $\max_{\X} \C_f(\X) \leq \delta_{\C}$,
  and two inputs $\X$ and $\X + \epsilon$ ($\in \bbR^d$). The relative distance between gradients at 
  these points is bounded by

\begin{align*}
  \dfrac{\| \grad f(\X + \epsilon) - \grad f(\X) \|_2}{\| \grad f(\X) \|_2 } 
    \leq \| \epsilon \|_2 \delta_{\C} \exp( \| \epsilon \|_2 \delta_{\C})
    \sim {\| \epsilon \|_2 \C_f(\X)} ~~~~\text{(Quadratic Approximation)}
\end{align*}
\end{proposition}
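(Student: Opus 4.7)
The plan is to reduce the claim to a Gr\"onwall-type argument on the gradient norm along the line segment joining $\X$ and $\X+\epsilon$. The starting point is the fundamental theorem of calculus applied to $\grad f$:
\begin{equation*}
\grad f(\X+\epsilon) - \grad f(\X) = \int_0^1 \grad^2 f(\X + t\epsilon) \, \epsilon \, dt,
\end{equation*}
which, after taking norms, yields
\begin{equation*}
\|\grad f(\X+\epsilon) - \grad f(\X)\|_2 \leq \|\epsilon\|_2 \int_0^1 \|\grad^2 f(\X + t\epsilon)\|_2 \, dt.
\end{equation*}
The hypothesis $\C_f(\X') \leq \delta_{\C}$ for every $\X'$ then converts each Hessian norm into a gradient norm (ignoring $\varepsilon$ in $\C_f$'s definition, which is in any event a small stabiliser), giving the bound $\|\grad^2 f(\X+t\epsilon)\|_2 \leq \delta_{\C}\, g(t)$, where $g(t) := \|\grad f(\X+t\epsilon)\|_2$.

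The key step is to control $g(t)$ uniformly along the segment. By reverse triangle inequality and the same integral identity,
\begin{equation*}
g(t) \leq g(0) + \|\grad f(\X+t\epsilon) - \grad f(\X)\|_2 \leq g(0) + \delta_{\C}\|\epsilon\|_2 \int_0^t g(s)\, ds.
\end{equation*}
Gr\"onwall's inequality immediately yields $g(t) \leq g(0)\exp(\delta_{\C}\|\epsilon\|_2\, t)$. This self-referential bootstrapping is where I expect the real work to sit: naively one would need the global Lipschitz constant of $\grad f$, which we do not have, but the curvature hypothesis together with Gr\"onwall substitutes a multiplicative self-bound instead.

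Substituting this exponential bound back into the original integral gives
\begin{equation*}
\|\grad f(\X+\epsilon) - \grad f(\X)\|_2 \leq \delta_{\C}\|\epsilon\|_2\, g(0) \int_0^1 \exp(\delta_{\C}\|\epsilon\|_2\, t)\, dt = g(0)\bigl(\exp(\delta_{\C}\|\epsilon\|_2) - 1\bigr).
\end{equation*}
Finally, using the elementary inequality $e^x - 1 \leq x e^x$ for $x \geq 0$ (immediate from $e^x - 1 = \int_0^x e^s ds \leq x e^x$), dividing both sides by $g(0) = \|\grad f(\X)\|_2$, we obtain the stated relative bound $\|\epsilon\|_2 \delta_{\C} \exp(\|\epsilon\|_2 \delta_{\C})$. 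The quadratic approximation falls out by linearising $\exp(\|\epsilon\|_2 \delta_{\C}) \approx 1$ and noting that the global $\delta_{\C}$ can be replaced by the pointwise $\C_f(\X)$ to leading order, since the Taylor remainder only involves second-derivative information near $\X$.
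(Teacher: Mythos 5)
Your proof is correct and reaches the stated bound, but by a genuinely different route than the paper. The paper writes $\grad f(\X+\epsilon)-\grad f(\X)=\grad^2 f(\X+\xi)^{\top}\epsilon$ via a Lagrange-remainder/mean-value argument at a single intermediate point $\xi$, and then controls the resulting factor $\|\grad f(\X+\xi)\|_2/\|\grad f(\X)\|_2$ with a separate lemma: differentiating $\log\|\grad f\|_2^2$, applying the mean value theorem and Cauchy--Schwarz, and exponentiating shows this ratio is at most $\exp(\|\epsilon\|_2\delta_{\C})$. Your Gr\"onwall step plays exactly the role of that lemma --- it is the integral form of the same ``gradient norms grow at most exponentially at rate $\delta_{\C}$ along the segment'' fact --- so the two arguments are the same in spirit while differing in mechanism. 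What your version buys: first, the fundamental-theorem-of-calculus form of the increment avoids the paper's slightly delicate appeal to a single mean-value point for the vector-valued map $\grad f$ (which strictly speaking requires an integral or componentwise argument anyway); second, you obtain the tighter intermediate bound $\exp(\|\epsilon\|_2\delta_{\C})-1$ before relaxing it to the stated $\|\epsilon\|_2\delta_{\C}\exp(\|\epsilon\|_2\delta_{\C})$ via $e^x-1\le xe^x$. Both treatments quietly drop the $\varepsilon$ stabiliser in the denominator of $\C_f$ when converting Hessian norms into gradient norms, and both treat the quadratic approximation heuristically, so you are consistent with the paper on those points.
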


The proof expands $f(\X)$ in a Taylor expansion around $\X + \epsilon$ and bounds the magnitude of the second and higher order terms over the neighborhood of $\X$, and the full argument is given in the appendix. The upper bound is considerably simpler when we assume that the function locally quadratic, which corresponds to the 
rightmost term $\| \epsilon \|_2 \C_f(\X)$. Thus the smaller the model curvature, the more locally stable are the gradients.

\subsection{Low Curvature is Necessary for $\ell_2$ Robustness}\label{sec:advrobustness}

Having a small gradient norm is known to be an important aspect of 
adversarial robustness \citep{hein2017formal}. However, 
small gradients alone are not sufficient, and low curvature is 
necessary, to achieve robustness. This is easy to see intuitively - a model may have 
low gradients at a point leading to robustness for small noise values, but if the 
curvature is large, then gradient norms at neighboring points can quickly increase, 
leading to misclassification for even slightly larger noise levels. 
This effect is an instance of \textit{gradient-masking} \citep{athalye2018obfuscated} , which provides an illusion of robustness by making models only locally robust.

In the result below, we formalize this intuition and establish an 
upper bound on the distance between two nearby points, which we show depends on 
both the gradient norm (as was known previously) and well as the max curvature of the 
underlying model.

\begin{proposition}
  Consider a model $f$ with
   $\max_{\X}\C_f(\X) \leq \delta_{\C}$, then
  for two inputs $\X$ and $\X + \epsilon$ ($\in \bbR^d$), we have the 
  following expression for robustness

  \begin{align*}
    \| f(\X + \epsilon) - f(\X) \|_2 
      &\leq \| \epsilon \|_2 \| \grad f(\X) \|_2 \left( 1 + 
          \| \epsilon \|_2 \delta_{\C} \exp( \| \epsilon \|_2 \delta_{\C})  \right)\\
      &\sim \| \epsilon \|_2  \| \grad f(\X) \|_2 \left( 1 + 
      \| \epsilon \|_2 \C_f(\X) \right) ~~~~ (\text{Quadratic Approximation})
  \end{align*}
\end{proposition}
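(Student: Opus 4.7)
The plan is to bound the function difference by integrating its gradient along the segment from $\X$ to $\X + \epsilon$ and then invoking the previous proposition (Proposition \ref{prop:gradrobust}) to control how much the gradient can change along that segment. Concretely, I would start from the fundamental theorem of calculus written as
\begin{equation*}
f(\X + \epsilon) - f(\X) \;=\; \grad f(\X)^\top \epsilon \,+\, \int_0^1 \bigl[\grad f(\X + t \epsilon) - \grad f(\X)\bigr]^\top \epsilon \,\mathrm{d}t,
\end{equation*}
and then apply the triangle inequality together with Cauchy--Schwarz to obtain
\begin{equation*}
\|f(\X + \epsilon) - f(\X)\|_2 \;\leq\; \|\grad f(\X)\|_2 \|\epsilon\|_2 \,+\, \|\epsilon\|_2 \int_0^1 \|\grad f(\X + t\epsilon) - \grad f(\X)\|_2 \,\mathrm{d}t.
\end{equation*}

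Next, I would apply Proposition \ref{prop:gradrobust} with perturbation $t\epsilon$ (which has norm $t\|\epsilon\|_2$), yielding
\begin{equation*}
\|\grad f(\X + t\epsilon) - \grad f(\X)\|_2 \;\leq\; \|\grad f(\X)\|_2 \cdot t \|\epsilon\|_2 \delta_{\C} \exp\bigl(t \|\epsilon\|_2 \delta_{\C}\bigr).
\end{equation*}
Substituting and factoring $\|\grad f(\X)\|_2 \|\epsilon\|_2$ out in front gives
\begin{equation*}
\|f(\X + \epsilon) - f(\X)\|_2 \;\leq\; \|\grad f(\X)\|_2 \|\epsilon\|_2 \Bigl( 1 + \|\epsilon\|_2 \delta_{\C} \int_0^1 t \exp\bigl(t \|\epsilon\|_2 \delta_{\C}\bigr) \,\mathrm{d}t \Bigr).
\end{equation*}
A straightforward monotonicity bound on the integrand, namely $t \exp(t\|\epsilon\|_2 \delta_{\C}) \leq \exp(\|\epsilon\|_2 \delta_{\C})$ on $[0,1]$, collapses the integral to $\exp(\|\epsilon\|_2 \delta_{\C})$ and recovers exactly the stated inequality. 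The quadratic approximation would then follow by replacing $\delta_{\C}$ in the linear factor with the pointwise $\C_f(\X)$ and approximating $\exp(\|\epsilon\|_2 \delta_{\C}) \approx 1$ for small $\|\epsilon\|_2$, exactly as in the companion proposition.

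The main work, and essentially the only nontrivial step, is chaining Proposition \ref{prop:gradrobust} inside the integral and being careful that the $t$-dependence in the perturbation is handled uniformly. I do not anticipate a genuine obstacle, but I would double-check that Proposition \ref{prop:gradrobust} is in fact stated for arbitrary pairs of points (so that the application with perturbation $t\epsilon$ and base point $\X$ is legitimate), and that the constant $\delta_{\C}$ is a uniform upper bound (so it applies along the whole segment). Given that, the argument reduces to the one-line integral bound above.
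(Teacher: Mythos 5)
Your proof is correct, but it takes a genuinely different route from the paper's. The paper expands $f(\X+\epsilon)$ to second order with a Lagrange-form remainder, $f(\X + \epsilon) - f(\X) = \grad f(\X)^\top \epsilon + \frac{1}{2}\epsilon^\top \grad^2 f(\X+\xi)\epsilon$, bounds the quadratic term via the definition of $\C_f$ at $\X+\xi$, and then invokes its gradient-ratio lemma ($\| \grad f(\X + \xi) \|_2 / \| \grad f(\X) \|_2 \leq \exp(\|\epsilon\|_2 \delta_{\C})$) to pull the gradient norm back to the base point. You instead write the increment in integral form and chain Proposition~\ref{prop:gradrobust} pointwise along the segment, which is legitimate since that proposition holds for arbitrary base point and perturbation and $\delta_{\C}$ is a global bound. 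Your route is more modular (it reuses the already-proved gradient-stability result rather than re-deriving the exponential growth bound from the lemma) and the integral remainder sidesteps the mean-value step entirely. The trade-off is a slightly looser constant: your bound $t\exp(t\|\epsilon\|_2\delta_{\C}) \leq \exp(\|\epsilon\|_2\delta_{\C})$ collapses the integral to $\exp(\|\epsilon\|_2\delta_{\C})$, whereas $\int_0^1 t\exp(ta)\,\mathrm{d}t \leq \frac{1}{2}\exp(a)$ would have recovered the factor of $\frac{1}{2}$ that the paper's appendix version of this proposition carries (the main-text statement you were asked to prove omits that $\frac{1}{2}$, so you match it exactly). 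Both arguments are sound; yours arguably makes the logical dependence between the two robustness propositions more transparent.
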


The proof uses similar techniques to those of proposition \ref{prop:gradrobust}, 
and is also given in the appendix. This result shows that given two models with 
equally small gradients at data points, the greater robustness will be achieved by the 
model with the smaller curvature. 

\section{Experiments}
\label{sec:expts}

In this section we perform experiments to (1) evaluate the effectiveness of our proposed method in
training models with low curvature as originally intended, (2) evaluate whether low curvature models
have robust gradients in practice, and (3) evaluate the effectiveness of low-curvature models for
adversarial robustness. Our experiments are primarily conducted on a base ResNet-18 architecture (\cite{He2016})
using the CIFAR10 and CIFAR100 datasets (\cite{Krizhevsky2009}), and using the Pytorch \cite{Paszke2017} framework. Our methods entailed fairly modest computation -- our most involved computations can be completed in under three GPU days, and all experimental results could be computed in less than 60 GPU-days. We used a mixture of GPUs -- primarily NVIDIA GeForce GTX 1080 Tis -- on an internal compute cluster.

\textbf{Baselines and Our Methods} Our baseline model for comparison involves using a ResNet-18 
model with softplus activation with a high $\beta=10^3$ to mimic ReLU, and yet have well-defined curvatures. 
Another baseline
is gradient norm regularization (which we henceforth call `GradReg') \cite{Drucker1992}, where the
same baseline model is trained with an additional penalty on the gradient norm. We train two variants of our approach - a base LCNN, which involves penalizing the curvature, and another variant combining LCNN penalty and gradient norm regularization (LCNN + GradReg), which controls both the 
curvature and gradient norm. Our theory indicates that the LCNN + GradReg variant is likely to produce 
more robust models, which we verify experimentally. 
We also compare with CURE \cite{MoosaviDezfooli2018}, softplus with weight decay \cite{Dombrowski2022} and adversarial training with $\ell_2$ PGD \cite{madry2018towards} with noise magnitude of $0.1$ and 3 iterations of PGD. We provide experimental details in the appendix.

\textbf{Parameter Settings} All our models are trained for 200 epochs
with an SGD + momentum optimizer, with a momentum of $0.9$ and an initial learning rate of $0.1$
which decays by a factor of 10 at 150 and 175 epochs, and a weight decay of $5 \times 10^{-4}$.

\subsection{Evaluating the Efficacy of Curvature Penalization}
In this section, we evaluate whether LCNNs indeed reduce model curvature in practice. 
Table \ref{tab:geometry} contains our results, from which we make the following observations:
(1) most of our baselines except CURE and adversarial training do not meaningfully
lose predictive performance
(2) GradReg and adversarially trained models are best at reducing gradient norm while LCNN-based models are best at penalizing curvature. Overall, these experimental results show that LCNN-based models indeed minimize curvature as intended.


We also observe in Table \ref{tab:geometry} that GradReg \cite{Drucker1992} has an unexpected regularizing effect on the Hessian and curvature. We conjecture that this is partly due to the following decomposition of the loss Hessian, which can be written as $\grad^2 f(\X) \sim \grad f_{l}(\X) \nabla^2_{f_l} \text{LSE}(\X) \grad f_{l}(\X)^{\top} + \nabla_{f_l} \text{LSE}(\X) \grad^2 f_l(\X)$, where $\text{LSE}(\X)$ is the LogSumExp
function, and $f_l(\X)$ is the pre-softmax logit output. We observe that the first term strongly 
depends on the gradients, which may explain the Hessian regularization effects of GradReg, while the second term depends on the Hessian of the bulk of the neural network,
which is penalized by the LCNN penalties. This also explains why combining both penalizations (LCNN + GradReg) further reduces curvature.

We also measure the average per-epoch training times on a GTX 1080Ti, which are: standard models / softplus + weight decay ($\sim 100$ sec), LCNN ($\sim 160$ sec), GradReg ($\sim 270$ sec), LCNN+GradReg ($\sim 350$ sec), CURE / Adversarial Training ($\sim 500$ sec). 
Note that the increase in computation for LCNN is primarily due to the use of spectral normalization layers. The results show that LCNNs are indeed able to penalize curvature by only marginally ($1.6 \times$) increasing training time, and using LCNN+GradReg only increases time $1.3 \times$ over GradReg while providing curvature benefits.

\begin{table}[h]
  \begin{center}
    \caption{Model geometry of various ResNet-18 models trained with various
    regularizers on the CIFAR100 test dataset. Gradient norm
    regularized models \cite{Drucker1992} (`GradReg') are best at reducing gradient norms, 
    while LCNN-based models are best at reducing curvature, leaving gradients unpenalized. 
    We obtain the benefits of both by combining these penalties. Results are averaged across two runs.}
  \label{tab:geometry}
    \begin{tabular}{c | c | c | c | c} 
     \textbf{Model} & $\E_{\X} \| \grad f(\X) \|_2$ & $\E_{\X} \| \grad^2 f(\X) \|_2$ & $\E_{\X} \C_f(\X)$ & Accuracy (\%) \\ 
     \midrule

     Standard & 19.66 \std{0.33} & 6061.96 \std{968.05} & 270.89 \std{75.04} & 77.42 \std{0.11}   \\ 
     LCNNs & 22.04 \std{1.41} & 1143.62 \std{99.38} & 69.50 \std{2.41} & 77.30 \std{0.11}\\
     GradReg \cite{Drucker1992} & \textbf{8.86} \std{0.12} & 776.56 \std{63.62} & 89.47 \std{5.86} & 77.20 \std{0.26}  \\ 
     LCNNs + GradReg & 9.87 \std{0.27} & \textbf{154.36} \std{0.22} & \textbf{25.30} \std{0.09} & 77.29 \std{0.07}  \\
     \midrule
     CURE \cite{MoosaviDezfooli2018} & \textbf{8.86} \std{0.01} & 979.45 \std{14.05} & 116.31 \std{4.58} & 76.48 \std{0.07}\\
     Softplus + Wt. Decay \cite{Dombrowski2022} & 18.08 \std{0.05} & 1052.84 \std{7.27} & 70.39 \std{0.88} & 77.44 \std{0.28}  \\
     Adversarial Training \cite{Madry2017} & \textbf{7.99} \std{0.03} & 501.43 \std{18.64} & 63.79 \std{1.65} & 76.96 \std{0.26}\\ 
    \end{tabular}

  \end{center}
  \end{table}

\subsection{Impact of Curvature on Gradient Robustness}

In \S \ref{sec:gradrobustness}, we showed that low-curvature models tend to have more robust gradients. 
Here we evaluate whether this prediction empirically by measuring the relative gradient robustness for the models 
with various ranges of curvature values and noise levels. In particular, we measure robustness to random 
noise at fixed magnitudes ranging logarithmically from $1 \times 10^{-3}$ to $1 \times 10^{-1}$. We plot our results in Figure \ref{fig:gradrobustness}
where we find that our results match theory (i.e, the simplified quadratic approximation in \S \ref{sec:gradrobustness}) quite closely in terms of the overall trends, and that low curvature models have an \textbf{order of magnitude} improvement in robustness over standard models. 

\begin{figure}
     \centering
    \begin{subfigure}[b]{0.47\textwidth}  
    \includegraphics[clip, trim=0.4cm 0.4cm 0.3cm 0.3cm, width=\textwidth]{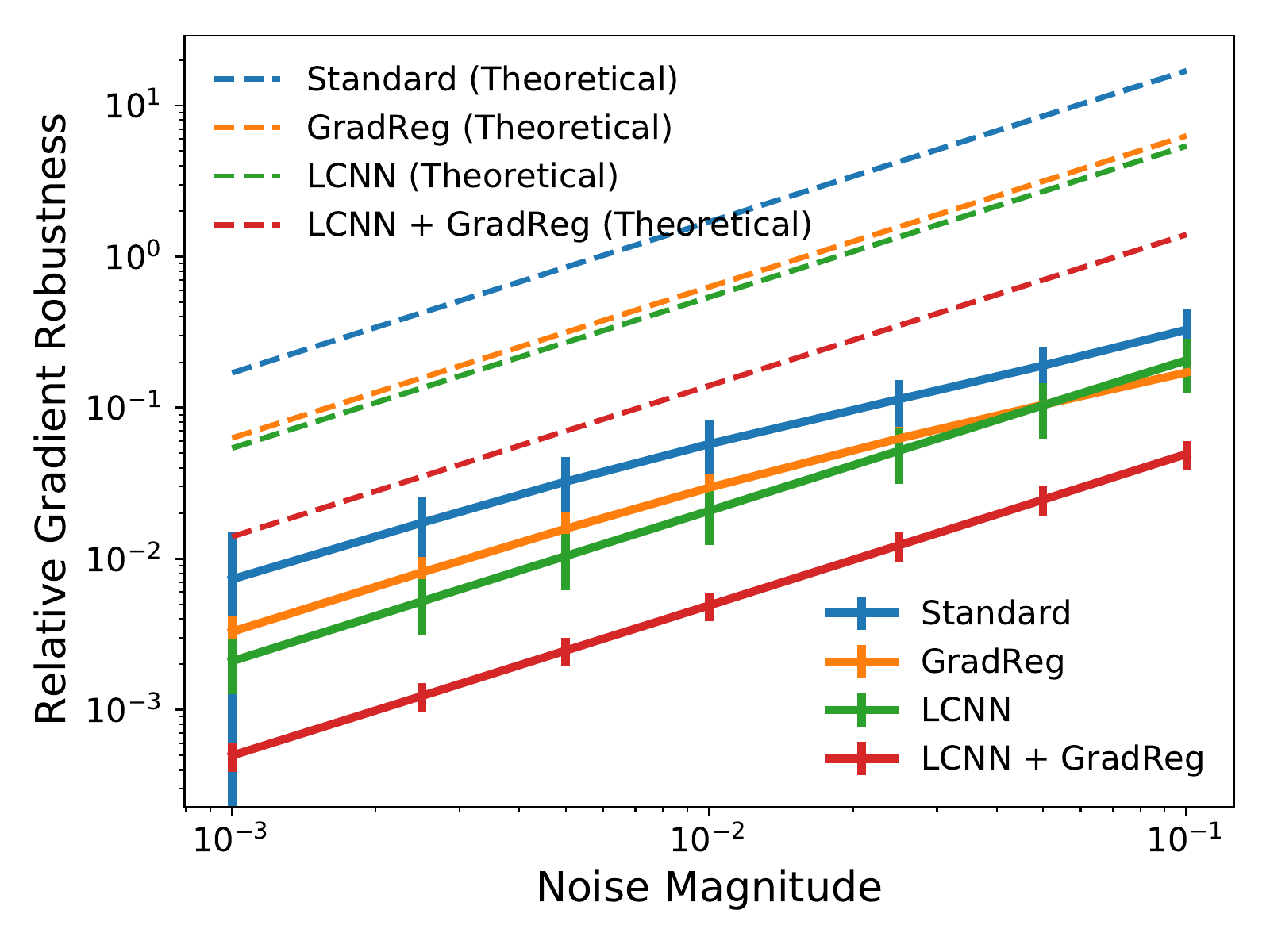}
      \caption{Gradient Robustness on CIFAR10}
      \label{subfig:a}
    \end{subfigure}\hfill 
    \begin{subfigure}[b]{0.47\textwidth}  
    \includegraphics[clip, trim=0.4cm 0.4cm 0.3cm 0.3cm, width=\textwidth]{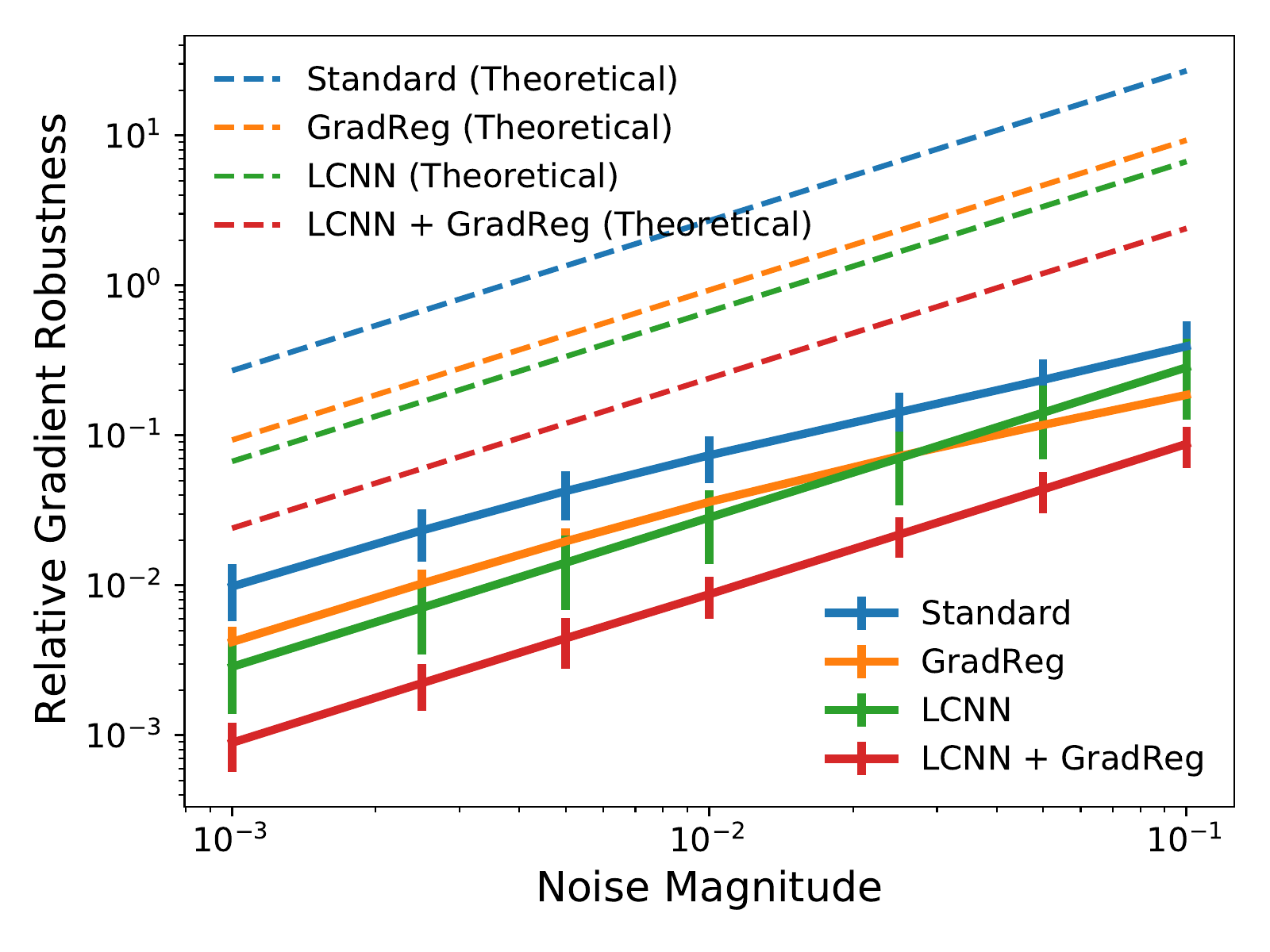}
      \caption{Gradient Robustness on CIFAR100}
      \label{subfig:b}
    \end{subfigure}

        \caption{Plot showing relative gradient robustness $\frac{\| \grad f(\X + \epsilon) - \grad f(\X) \|_2}{\| \grad f(\X) \|_2}$ as a function of added noise $\| \epsilon \|_2$ on (a) CIFAR10 and (b) CIFAR100
        with a ResNet-18 model. We observe that low-curvature models lead to an \textbf{order of magnitude} improvement 
        in gradient robustness, and this improvement closely follows the trend predicted by the theoretical upper bound in \S \ref{sec:properties}.}
        \label{fig:gradrobustness}
\end{figure}

\subsection{Impact of Curvature on Adversarial Robustness}
Our theory in \S \ref{sec:advrobustness} shows that having low curvature is necessary for robustness, along with having 
small gradient norms. In this section we evaluate this claim empirically, by evaluating 
adversarial examples via $\ell_2$ PGD \citep{madry2018towards} adversaries with various noise magnitudes. 
We use the Cleverhans library \cite{Papernot2018} to implement PGD. Our results are present in Table \ref{tab:advex} where we find that LCNN+GradReg models perform on par with adversarial training, without a resulting accuracy loss.

\begin{table}[h]
  \begin{center}
    \caption{Results indicating off-the-shelf model accuracies (\%) upon using $\ell_2$ PGD adversarial examples across various noise magnitudes. Adversarial training performs the best overall, however sacrifices clean accuracy. LCNN+GradReg
    models perform similarly but without significant loss of clean accuracy. Results are averaged across two runs.
    } 
  \label{tab:advex}
    \begin{tabular}{c | c | c | c | c | c} 
     \textbf{Model} & Acc. (\%) & $\| \epsilon \|_2=0.05$ & $\| \epsilon \|_2=0.1$ & $\| \epsilon \|_2=0.15$ & $\| \epsilon \|_2=0.2$ \\ 
     \midrule
     Standard & 77.42 \std{.10} & 59.97 \std{.11} & 37.55 \std{.13} & 23.41 \std{.08} & 16.11 \std{.21}   \\ 
     LCNN & 77.16 \std{.07} & 61.17 \std{.53} & 39.72 \std{.17} & 25.60 \std{.32} & 17.66 \std{.18} \\
     GradReg &  77.20 \std{.26} & 71.90 \std{.11} & 61.06 \std{.03} & 49.19 \std{.12} & 38.09 \std{.47}  \\ 
     LCNNs + GradReg & 77.29 \std{.26} & \textbf{72.68} \std{.52} & \textbf{63.36} \std{.39} & \textbf{52.96} \std{.76} & \textbf{42.70} \std{.77}  \\  
     \midrule
     CURE \cite{MoosaviDezfooli2018} & 76.48 \std{.07} & 71.39 \std{.12} & 61.28 \std{.32} & 49.60 \std{.09} & 39.04 \std{.16}\\
     Softplus + Wt. Decay \cite{Dombrowski2022} & 77.44 \std{.28} & 60.86 \std{.36} & 38.04 \std{.43} & 23.85 \std{.33} & 16.20 \std{.01} \\
     Adversarial Training \cite{Madry2017} & 76.96 \std{.26} & \textbf{72.76} \std{.15} & \textbf{64.70} \std{.20} & \textbf{54.80} \std{.25} & \textbf{44.98} \std{.57}\\
    \end{tabular}

  \end{center}
  \end{table}

\subsection{Train-Test Discrepancy in Model Geometry}
During our experiments, we observed a consistent phenomenon where the 
gradient norms and Hessian norms for test data points were much larger than those for 
the train data points, which hints at a form of overfitting with regards to these quantities. 
We term this phenomenon as the \textit{train-test discrepancy} in model geometry. Interestingly, 
we did not observe any such discrepancy using our proposed curvature measure, indicating that 
our proposed measure may be a more reliable measure of model geometry. We report our results in 
Table \ref{tab:discrepancy}, where we measure the relative discrepancy, finding that the discrepancy for our 
proposed measure of curvature is multiple orders of magnitude smaller than the corresponding quantity 
for gradient and Hessian norms. We leave further investigation of this phenomenon -- regarding why curvature is stable across train and test -- as a topic for future work. 

\begin{table*}[h]
  \begin{center}
    \caption{Train-test descrepancy in model geometry, where the 
    relative descrepancy $\Delta_{tt} g(\mathcal{X}) = |\frac{g(\mathcal{X}_{\textnormal{test}}) - g(\mathcal{X}_{\textnormal{train}})}{g(\mathcal{X}_{\textnormal{test}})}|$ 
    is shown for three different geometric measures. We observe that 
    (1) there exists a large train-test descrepancy, with the test
    gradient / hessian norms being $ > 10\times $ the corresponding values for the train set.
    (2) the descrepancy is 2-3 orders of magnitude smaller for our proposed curvature measure, 
    indicating that it may be a stable model property.} 
    \label{tab:discrepancy}
    \begin{tabular}{c | c | c | c } 
     \textbf{Model} & $ \Delta_{tt} \E_{\X \in \mathcal{X}} \| \grad f(\X) \|_2$ & $ \Delta_{tt} \E_{\X \in \mathcal{X}} \| \grad^2 f(\X) \|_2$ & $ \Delta_{tt} \E_{\X \in \mathcal{X}} \C_f(\X)$\\ 
     \midrule

     Standard & 11.75 & 12.28 & \textbf{0.025}   \\ 
     GradReg &  11.33 & 11.22 & \textbf{0.017}  \\ 
     LCNN & 19.99 & 11.33 & \textbf{0.129} \\
     LCNNs + GradReg & 21.82 & 10.43 & \textbf{0.146}  \\  
    
    \end{tabular}

  \end{center}
  \end{table*}

\paragraph{Summary of Experimental Results} Overall, our experiments show that:

(1) LCNNs have lower curvature than standard models as advertised, and combining them with gradient norm regularization further decreases curvature (see Table \ref{tab:geometry}). The latter phenomenon is unexpected, as our curvature measure ignores gradient scaling. 

(2) LCNNs combined with gradient norm regularization achieve an order of magnitude improved gradient robustness over standard models (see Figure \ref{fig:gradrobustness}).

(3) LCNNs combined with gradient norm regularization outperform adversarial training in terms of achieving a better predictive accuracy at a lower curvature (see Table \ref{tab:geometry}), and are competitive in terms of adversarial robustness (see Table \ref{tab:advex}), while being $\sim 1.4\times$ faster. 

(4) We observe that there exists a train-test discrepancy for standard geometric quantities like the gradient and Hessian norm, and this discrepancy disappears for our proposed curvature measure (see Table \ref{tab:discrepancy}). 

We also present ablation experiments, additional adversarial attacks, and evaluations on more datasets and architectures in the Appendix.

\section{Discussion}
In this paper, we presented a modular approach to remove excess curvature in neural network models. Importantly, we found that combining vanilla LCNNs with gradient norm regularization resulted in models with the smallest curvature, the most stable gradients as well as those that are the most adversarially robust. Notably, this procedure achieves adversarial robustness \textbf{without} explicitly generating adversarial examples during training. 

The current limitations of our approach are that we only consider convolutional and fully connected layers, and not self-attention or recurrent layers. We also do not investigate the learning-theoretic benefits (or harms) of low-curvature models, or study their generalization for small number of training samples, or their robustness to label noise (which we already observe in Fig. \ref{fig:LCNN}). Investigating these are important topics for future work.

\begin{ack}
The authors would like to thank the anonymous reviewers for their helpful feedback and all the funding agencies listed below for supporting this work. SS and HL are supported in part by the NSF awards $\#$IIS-2008461 and $\#$IIS-2040989, and research awards from Google, JP Morgan, Amazon, Harvard Data Science Initiative, and D$^3$ Institute at Harvard. KM and SS (partly) are supported by the Swiss National Science Foundation under grant number FNS-188758 ``CORTI''. HL would like to thank Sujatha and Mohan Lakkaraju for their continued support and encouragement.
\end{ack}

\bibliographystyle{unsrtnat}
\bibliography{neurips/biblio}

\appendix

\section{Proof of Theorem 1}

In this section, we provide the proof of the main theorem in the paper, which decomposes overall 
curvature into curvatures and slopes of constituent layers. We state Theorem 1 below for reference.

\begin{theorem}

Given a function $f = f_L \circ f_{L-1} \circ \hdots \circ f_1$ with $f_i: \bbR^{n_{i-1}} \rightarrow \bbR^{n_i}$, the curvature $\C_f$ can be bounded as follows 

\begin{equation}
\C_f(\X) \leq \sum_{i=1}^{L} n_i \times \C_{f_i}(\X) ~ \prod_{j=1}^{i} \| \nabla_{f_{j-1}} f_j(\X) \|_2 \leq \sum_{i=1}^{L} n_i \times \max_{\X'} \C_{f_i}(\X') ~ \prod_{j=1}^{i} \max_{\X'} \| \nabla_{f_{j-1}} f_j(\X') \|_2. \label{eqn:curvature}
\end{equation}
\end{theorem}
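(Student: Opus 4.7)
The plan is to extend the 1-dimensional argument of Lemma 1 to vector-valued layers via a second-order chain rule, take spectral norms of each term, and control the resulting tensor norms using an unfolding inequality of Wang (2017). The starting point is the recursive identity: if $f^{(k)} = f_k \circ f^{(k-1)}$ with intermediate activations $h_k = f_k(h_{k-1})$, Jacobians $J_k = \nabla f_k(h_{k-1})$, and Hessian 3-tensors $H_k = \nabla^2 f_k(h_{k-1}) \in \bbR^{n_k \times n_{k-1} \times n_{k-1}}$, then by Faà di Bruno applied once,
\begin{equation*}
\nabla^2 f^{(k)} \;=\; H_k \cdot_{(2,3)} \bigl(J^{(k-1)}, J^{(k-1)}\bigr) \;+\; J_k \cdot_1 \nabla^2 f^{(k-1)},
\end{equation*}
where $J^{(k-1)} = J_{k-1}\cdots J_1$ is the cumulative Jacobian, and $\cdot_m$ denotes mode-$m$ multilinear multiplication. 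Unrolling $k = 1,\ldots,L$ expresses $\nabla^2 f$ as a sum of $L$ terms, each of the form $L_k \cdot_1 H_k \cdot_{(2,3)} (J^{(k-1)}, J^{(k-1)})$, with $L_k = J_L \cdots J_{k+1}$. This is the direct analog of the product-rule expansion $f'' = \sum_i f_i'' \prod_{j\le i} f_j' \prod_{k>i,k\ne i} f_k'$ used in Lemma 1.

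Next I would take the tensor spectral norm of this decomposition and apply the triangle inequality to pass the norm through the sum. For each summand, the goal is to bound its spectral norm by the product of the spectral norms of the constituent factors: $\|H_k\|_2$, $\|L_k\|_2$, and two copies of $\|J^{(k-1)}\|_2$, together with submultiplicativity $\|L_k\|_2 \le \prod_{j>k}\|J_j\|_2$ and $\|J^{(k-1)}\|_2 \le \prod_{j<k}\|J_j\|_2$. Dividing through by $\|\nabla f(x)\|_2$ and using $\C_{f_k} = \|H_k\|_2/\|J_k\|_2$ should then telescope the Jacobian products into the stated form $\prod_{j=1}^i \|\nabla_{f_{j-1}}f_j\|_2$. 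The data-free bound in the rightmost expression of (\ref{eqn:curvature}) follows by replacing each pointwise norm by its supremum over $\X'$.

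The main obstacle is the tensor-norm bookkeeping in the step above. A naive application of the multilinear operator inequality $\|T \cdot_{(2,3)} (A,A)\|_2 \le \|T\|_2 \|A\|_2^2$ yields a squared Jacobian factor $\|J^{(k-1)}\|_2^2$ per term, which does not line up with the product $\prod_{j=1}^i \|\nabla f_j\|_2$ appearing in the theorem. This is precisely where the Wang (2017) machinery is needed: by identifying $H_k$ with its mode-1 (or a suitable) matrix unfolding $H_{k,(1)} \in \bbR^{n_k \times n_{k-1}^2}$, the tensor spectral norm is comparable to the matrix operator norm of this unfolding up to a dimensional factor (which is where the $n_i$ premultiplier arises). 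The unfolding then lets one copy of $J^{(k-1)}$ be absorbed into the Kronecker structure on the right-hand factor and the other into the operator-norm contraction, so that each Jacobian appears only to the first power in the final bound.

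The remaining pieces — verifying the correct mode of unfolding to obtain exactly the factor $n_i$, checking that the off-by-one in the product index ($\prod_{j=1}^i$ versus $\prod_{j<i}$) is resolved by pairing the $\|J_i\|_2$ contributed by the unfolding with the $1/\|J_i\|_2$ absorbed into $\C_{f_i}$, and supremizing to get the data-free version — are routine once the tensor-to-matrix reduction is in place. I expect no additional conceptual difficulty beyond correctly invoking the unfolding inequality and tracking the dimensional constants.
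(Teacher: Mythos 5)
Your overall architecture matches the paper's: the second-order chain rule gives $\nabla^2 f$ as a sum of increments of the form $(\nabla^2 f_k)\left(\nabla f_{1,k-1}, \nabla f_{k+1,L}^\top, \nabla f_{1,k-1}\right)$, you apply the triangle inequality, and you reduce the order-3 tensor norms to matrix norms via the unfolding inequality of \citet{Wang2017}, picking up the dimensional factor $n_i$. The gap is in how you dispose of the second copy of the cumulative Jacobian. Bounding each increment by $\|H_k\|_2\,\|L_k\|_2\,\|J^{(k-1)}\|_2^2$ and then ``dividing through by $\|\nabla f(\X)\|_2$'' cannot telescope: after substituting $\|H_k\|_2 = \C_{f_k}\|J_k\|_2$ you are left with $n_k\,\C_{f_k}\,\|J^{(k-1)}\|_2\cdot\bigl(\prod_{j}\|J_j\|_2\bigr)/\|\nabla f\|_2$, and since $\|\nabla f\|_2 \le \prod_j \|J_j\|_2$ that quotient is $\ge 1$, so the inequality runs the wrong way and the extra Jacobian factor does not cancel. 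Your proposed fix --- that the unfolding lets one copy of $J^{(k-1)}$ be ``absorbed into the Kronecker structure'' --- does not by itself remove a power of the Jacobian.

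The mechanism the paper actually uses is an exact cancellation against the denominator of the \emph{normalized} curvature, and it relies on structure your general treatment never invokes. Because the nonlinear layers are elementwise, $\nabla^2 f_i$ is a diagonal order-3 tensor $\pdiag3(\beta\sigma(1-\sigma))$, which can be factored (the paper's Fact \#1) so that the layer's own first derivative $\pdiag2(\sigma(\beta x)) = \nabla f_i$ lands in the third slot. The unfolding bound is then applied in the form $\|M_1\|\cdot\|M_2^\top M_3\|\cdot n_i$, keeping the matrix product $M_2^\top M_3$ intact rather than splitting it into $\|M_2\|\,\|M_3\|$; with the factorization above, $M_2^\top M_3 = \nabla f_{i+1,L}^\top\,\nabla f_i\,\nabla f_{1,i-1} = \nabla f$, the full network gradient, whose norm cancels exactly with the $\|\nabla f\|$ in the denominator of $\C_f$. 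Only the single surviving copy $\|\nabla f_{1,i-1}\|\le\prod_{j<i}\|\nabla f_j\|$ remains, giving the stated product. Without identifying this grouping --- and the elementwise/diagonal hypothesis on the activation Hessian that makes it possible --- the squared-Jacobian obstacle you correctly flag remains unresolved.
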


This statement is slightly different than the one given in the paper, differing by a term of the width of each nonlinear layer. Since we do not use or care about the units of curvature, only its minimization, and have elected to equally-weight each term of the sum, this is an inconsequential discrepancy.

A similar bound is constructed recursively in \cite{Singla2020}. \cite{Dombrowski2022} gives a similar formula, albeit for the Frobenius norm. The Frobenius norm is both simpler, because the sum of squared entries is independent of the layout of the data, and also weaker, since it cannot deliver a bound which holds uniformly in the data. To our knowledge \autoref{eqn:curvature} is the first explicit, easily-interpreted formula of its type. 

We start with some preliminaries, with the actual proof being in \autoref{sec:putting_together}.


\subsection{Derivatives of compositional functions}

For a function $f: \bbR^d \rightarrow \bbR^r$, let $\nabla f: \bbR^d \rightarrow \bbR^{d \times r}$
 denote its gradient, and $\nabla^2 f: \bbR^{d} \rightarrow \bbR^{d  \times r \times d}$ denote 
its Hessian. We drop the argument to functions when possible, and all norms will be spectral 
norms. 

Given $L$ functions $f_i: \bbR^{n_{i-1}} \rightarrow \bbR^{n_i}, i = 1 \hdots, L$ let $f_{k, k + j} = f_{k+j} \circ f_{k + j-1} \circ \hdots \circ f_k: \bbR^{n_{k-1}} \rightarrow \bbR^{n_{k + j}}$ for $1 \le k \le k + j \le L$. This function composition will be our model of a deep neural network where $f_\ell$ represents the action of the $\ell$th layer. 

If each $f_i$ is continuously differentiable, we have this formula for the gradient of $f_{k, k + j}$ 
\begin{align}
\label{eqn:network_gradient}
\nabla f_{k, k + j} &= \prod_{i=1}^{j} \nabla f_{k + j - i + 1} \in \bbR^{n_{k + j} \times n_k}.
\end{align}

where we adopt the convention that $f_{j,j}(x) = x$ in order to make the boundary conditions correct. The product begins at the end, with $\nabla f_{k + j}$ and progresses forward through the indices -- a straightforward consequence of the chain rule of differentiation. Supposing moreover that each $f_i$ is twice-differentiable, we have this formula for the second derivative of $f_{1, k}$:

\begin{equation}
\begin{aligned}
\label{eqn:network_hessian}
\nabla^2 f_{1, k} &= \sum_{i=1}^{k} \nabla^2 f_{1, i} - \nabla^2 f_{1, i-1} \\
\textnormal{ where } (\nabla^2 f_{1, i} - \nabla^2 f_{1, i-1}) &= (\nabla^2 f_i)\left(\nabla f_{1, i-1}, \nabla f_{i+1, k}^\top, \nabla f_{1, i-1}\right) \in \bbR^{n_0 \times n_k \times n_0}
\end{aligned}
\end{equation}

where we have used the \emph{covariant multilinear matrix multiplication} notation: $\nabla^2 f_i$ is an order-three tensor $\in \bbR^{n_i \times n_i \times n_i}$, with the first and third modes multiplied by $\nabla f_{1, i-1} \in \bbR^{n_i \times n_0}$ and the second mode multiplied by $\nabla f_{i+1, k}^\top \in \bbR^{n_i \times n_k}$.

\subsection{Tensor calculus}
\label{eqn:tensor_calculus}

In this section, we present a simplifed version of the notation from \cite{Wang2017}. A $k$-linear map is a function of $k$ (possibly multidimensional) variables such that if any $k - 1$ variables are held constant, the map is linear in the remaining variable. A $k$-linear function is can be represented by an order-$k$ tensor $\calA$ given elementwise by $\calA = \llbracket a_{j_1 \hdots j_k} \rrbracket \in \bbR^{d_1 \times \hdots \times d_k}$. 

The covariant multilinear matrix multiplication of a tensor with matrices (order 2 tensors) $M_1 = (m_{i_1j_1}^{(1)}) \in \bbR^{d_1 \times s_1}, \hdots, M_k = (m_{i_kj_k}^{(k)}) \in \bbR^{d_k \times s_k}$ is

\begin{align*}
\calA(M_1, \hdots, M_k) = \left\llbracket \sum_{i_1 = 1}^{d_1} \hdots \sum_{i_k}^{d_k} a_{i_1\hdots i_k} m_{i_1j_1}^{(1)} \hdots m_{i_k j_k}^{(k)} \right\rrbracket \in \bbR^{s_1 \times \hdots \times s_k}.
\end{align*}

This operation can be implemented via iterated einsums as:
\begin{verbatim}
def covariant_multilinear_matmul(a: torch.Tensor,
                                 mlist: List[torch.Tensor]) -> torch.Tensor:
    order = a.ndim
    base_indices = string.ascii_letters
    indices = base_indices[:order]
    next_index = base_indices[order]

    val = a
    for idx in range(order):
        resp_str = indices[:idx] + next_index + indices[idx+1:]
        einsum_str = indices + f",{indices[idx]}{next_index}->{resp_str}"
        val = torch.einsum(einsum_str, val, mlist[idx])
    return val
\end{verbatim}

For example, covariant multilinear matrix multiplication of an order two tensor is pre- and post- multiplication by its arguments: $M_1^\top \calA M_2 = \calA(M_1, M_2)$. The generalization of the matrix spectral norm is $|| \calA ||_2 = \sup \{ \calA(x_1, \hdots , x_k) : ||x_i || = 1, x_i \in \bbR^{d_i}, i = 1, 2, \hdots, k\}$. The computation of order-$k$ operator norms is hard in theory, and also in practice (cf. \cite{Friedland2018}). In order to address this difficulty, we introduce an instance of the \emph{unfold} operator. 

For $\calA \in \bbR^{d_1 \times d_2 \times d_3}$, $\textnormal{unfold}_{\{\{1, 2\}, \{3\}\}}(\calA) \in \bbR^{d_1d_2 \times d_3}$ is the matrix with the $j$th column being the flattened $j$th (in the final index) $d_2 \times d_3$ matrix.\footnote{In PyTorch notation, $\textnormal{unfold}_{\{\{1, 2\}, \{3\}\}}(\texttt{a}) = \texttt{torch.flatten(a, end\_dim=1)}$} Unfolding is useful because it it allows us to bound an order-3 operator norm in terms of order-2 operator norms -- 
\citet{Wang2017} shows that $||\calA|| \le ||\textnormal{unfold}_{\{\{1, 2\}, \{3\}\}}(\calA)||$. The upper bound -- the operator norm of a \emph{matrix} -- can computed with standard largest singular-value routines. A similar unfolding-based bound was used in the deep-learning context by \cite{Singla2019} to give improved estimates on the spectral norm of convolution operators.

To facilitate the analysis of unfolded tensors, we coin operations that \emph{put to} the diagonal of tensors: 

\begin{itemize}
\item $\pdiag2: \bbR^d \mapsto \bbR^{d \times d}$ defined by $\textnormal{pdiag}2(x)_{ij} = 
\begin{cases} 
x_j &\textnormal{ if } i = j \\
0 &\textnormal{ otherwise.}
\end{cases}$
\item $\textnormal{pdiag}3: \bbR^d \mapsto \bbR^{d \times d \times d}$ defined by $\textnormal{pdiag}3(x)_{ijk} = 
\begin{cases} 
x_j &\textnormal{ if } i = j = k \\
0 &\textnormal{ otherwise.} 
\end{cases}$
\end{itemize}

Further, let $1_n \in \bbR^n$ be a vector of ones, $I_n = \pdiag2(1_n) \in \bbR^{n \times n}$ be the $n$-dimensional identity matrix, and $\calI_n = \pdiag3(1_n) \in \bbR^{n \times n \times n}$. For two vectors $a \in \bbR^n, b \in \bbR^n$, let $ab$ denote the elementwise product. $\otimes$ denotes the well-understood Kronecker product, so that, for example, $1_n^\top \otimes I_m$ is an $m \times nm$ matrix consisting of $n$ copies of the $m \times m$ identity matrix stacked side by side. Where it is redundant, we drop the subscripts indicating dimension.

We use the following facts about tensors, their unfoldings, and their operator norms, in what follows. 

\begin{enumerate}
\item $\calA = \pdiag3(ab) \implies \calA(M_1, M_2, M_3) = \calI(\pdiag2(a)M_1, M_2, \pdiag2(b)M_3)$
\item $\textnormal{unfold}_{\{\{1, 2\}, \{3\}\}}(\calI(M_1, M_2, M_3)) = (M_1 \otimes I_{s_2})^\top \textnormal{unfold}_{\{\{1, 2\}, \{3\}\}}(\calI(I_{d_1}, M_2, M_3))$
\item $||(1_{d_1}^\top \otimes I_{s_2}) \textnormal{unfold}_{\{\{1, 2\}, \{3\}\}}(\calI(I_{d_1}, M_2, M_3))|| \le ||M_2^\top M_3|| s_2$
\end{enumerate}

Taken together Fact \#2 and \#3 imply that 

\begin{align}
\label{eqn:essential}
||\textnormal{unfold}_{\{\{1, 2\}, \{3\}\}}(\calI(M_1, M_2, M_3))|| \le ||M_1 || \times ||M_2^\top M_3|| \times s_2
\end{align}
which is our essential bound for the norm of an order-3 tensor in terms of order-2 tensors. 


\subsection{Hessian increment bound}
Let $\sigma(x) = \exp(x) / (1 + \exp(x)))$ denote the (elementwise) logistic function $\bbR^d \mapsto \bbR^d$. The derivatives of $s(x; \beta)$ can be written as 

\begin{align}
\label{eqn:softplus_first_derivative}
\nabla s(x; \beta) &= \textnormal{pdiag}2(\sigma(\beta x)) \in \bbR^{d \times d} \\
\label{eqn:softplus_second_derivative}
\nabla^2 s(x; \beta) &= \pdiag3(\beta \sigma(\beta x)(1 - \sigma(\beta x))) \in \bbR^{d \times d \times d}.
\end{align}

Let the $i$th softplus layer have coefficient $\beta_i$, then the increment from \autoref{eqn:network_hessian}, can be bounded as follows:
\begin{align}
& \label{eqn:incr_bound0}
||(\nabla^2 f_i)\left(\nabla f_{1, i-1}, \nabla f_{i+1, k}^\top, \nabla f_{1, i-1}\right)||  \\
\label{eqn:incr_bound1}
=& ||\beta_i \calI_{n_i}\left(\pdiag2(1 - \sigma(\beta_i x)) \nabla f_{1, i-1}, \nabla f_{i+1, k}^\top, \pdiag2(\sigma(\beta_i x))\nabla f_{1, i-1}\right)|| \\
\label{eqn:incr_bound2}
\le& || \beta_i \textnormal{unfold}_{\{\{1, 2\}, \{3\}\}}(\calI_{n_i}(\pdiag2(1 - \sigma(\beta_i x)) \nabla f_{1, i-1}, \nabla f_{i+1, k}^\top, \pdiag2(\sigma(\beta_i x))\nabla f_{1, i-1})|| \\
\label{eqn:incr_bound3}
\le& ||\beta_i (\pdiag2(1 - \sigma(\beta_i x)) \nabla f_{1, i-1} || \times ||\nabla f_{i+1, k}^\top \pdiag2(\sigma(\beta_i x))\nabla f_{1, i-1}|| \times n_i \\
\label{eqn:incr_bound4}
\le& n_i \times ||\beta_i (\pdiag2(1 - \sigma(\beta_i x)) || \times ||\nabla f_{1, i-1}|| \times ||\nabla f_{i+1, k}^\top \nabla f_i  \nabla f_{1, i-1}|| \\
\label{eqn:incr_bound5}
=& n_i \times \C_{f_i} \times ||\nabla f_{1, i-1}|| \times ||\nabla f||.
\end{align}
\autoref{eqn:incr_bound1} follows by Fact \#1 above, along with \autoref{eqn:softplus_second_derivative}. 
\autoref{eqn:incr_bound2} is the standard unfolding bound by \cite{Wang2017}. 
\autoref{eqn:incr_bound3} is our main bound on order-3 tensors in terms of order-2 matrices, \autoref{eqn:essential}. 
\autoref{eqn:incr_bound4} follows from the Cauchy-Schwartz inequality. The replacement in the last term of the product is \autoref{eqn:softplus_first_derivative}. 
\autoref{eqn:incr_bound5} rewrites \autoref{eqn:incr_bound4} using \autoref{eqn:network_gradient}: $ f_{i+1, k}^\top \nabla f_i \nabla f_{1, i-1} = \nabla f_{1, k}$ and the definition of the curvature of $f_i$. 
 
\subsection{Putting it together}
\label{sec:putting_together}
Ignoring the $\epsilon$ term, 
\begin{align}
\label{eqn:curvature_bound}
\C_f(\X) &= \frac{\| \grad^2 f(\X) \| }{\| \grad f(\X) \|} \\
\label{eqn:put_together0}
         &= \frac{1}{\| \grad f(\X) \|} \left| \left| \sum_{i=1}^L \nabla^2 f_{1, i}(\X) - \nabla^2 f_{1, i-1}(\X) \right| \right| \\
\label{eqn:put_together1}
         &\le \frac{1}{\| \grad f(\X) \|} \sum_{i=1}^L ||\nabla^2 f_{1, i}(\X) - \nabla^2 f_{1, i-1}(\X)|| \\
\label{eqn:put_together2}
         &\le \sum_{i=1}^L n_i \times \C_{f_i}(\X) \times ||\nabla f_{1, i-1}(\X)|| \\
\label{eqn:put_together3}
         &\le \sum_{i=1}^L n_i \times \C_{f_i}(\X) \times \prod_{i=1}^j ||\nabla f_{i}(\X)|| \\
\label{eqn:put_together4}
         &\le \sum_{i=1}^L n_i \times \max_{\X} \C_{f_i}(\X) \times \prod_{i=1}^j \max_{\X} ||\nabla f_{i}(\X)||
\end{align}
\autoref{eqn:put_together0} substitutes \autoref{eqn:network_hessian}.
\autoref{eqn:put_together1} is the triangle inequality. 
\autoref{eqn:put_together2} is \autoref{eqn:incr_bound5}, along with cancelling the term of $||\nabla f||$ top and bottom.
\autoref{eqn:put_together3} are \autoref{eqn:put_together4} are obvious and a standard simplification in the literature on controlling the Lipschitz constant of neural networks. Because exactly computing the smallest Lipschitz constant of a general neural network is NP-complete, a widely-used baseline measure of Lipschitz-smoothness is rather the product of the Lipschitz constants of smaller components of the network, such as single layers (\cite{Scaman2018}). 
\qed 

\section{Loss curvature vs logit curvature}
\label{sec:loss_vs_logit}

We have thusfar discussed how to assure a bound on curvature independent of the input $x$. However, a truly data-independent bound must also not depend on the class label $y$. In this short section we discuss the main consideration that accompanies this: the differences between loss and (pre-softmax) logit curvature. Let the function including the loss for a class $c$ be denoted by $f^c_{\textnormal{loss}}$, and the logit corresponding to a class $c \in [1, C]$ be $f_{\textnormal{logit}}^c$, and let $f_{\textnormal{logit}}^{1,C}$ denote the vector valued function corresponding to all the logits.
Then we have $f^c_{\textnormal{loss}} = \text{lsm}^c \circ f^{1,C}_{\textnormal{logit}}$, where $\text{lsm}^c(\X) = -\log \frac{\exp(\X_c)}{\sum_{i=1}^C \exp(\X_i)} = -\X_c + \log \sum_{i=1}^C \exp(\X_i)$ is the negative log softmax function.

The derivatives of this function are:

\begin{align*}
    \frac{d \text{lsm}^c(\X)}{d \X_c} &= \frac{\exp(\X_c)}{\sum_{i=1}^C \exp(\X_i)} - 1 \\
    \frac{d^2 \text{lsm}^c(\X)}{d \X_c^2} &= \frac{\exp(\X_c)}{\sum_{i=1}^C \exp(\X_i)} \left(1 - \frac{\exp(\X_c)}{\sum_{i=1}^C \exp(\X_i)} \right)  \\
\end{align*}

whose norm is upper bounded by $1$ and $0.25$ respectively. This implies that bounding the logit curvature 
also ensures that the loss curvature is bounded, as the gradients and Hessians of the negative log softmax layer do not explode. In other words, $\C_{f_{\textnormal{loss}}} < \C_{f_{\textnormal{\textnormal{logit}}}}$.

However penalizing an upper bound may not be the most efficient way to penalize the loss Hessian. Consider the following decomposition of the loss Hessian, which can be written as $\grad^2 f(\X) \sim \grad f_{\textnormal{logit}}(\X) \nabla^2_{f_l} \text{LSE}(\X) \grad f_{\textnormal{logit}}(\X)^{\top} + \nabla_{f_l} \text{LSE}(\X) \grad^2 f_{\textnormal{logit}}(\X)$, where $\text{LSE}(\X)$ is the LogSumExp
function. Thus a more efficient way to penalize the loss Hessians is to both use our LCNN penalties, as well as
penalize the gradient norm, which we also find to be true in our experiments.

\section{$\gamma$-Lipschitz Batchnorm}

We present here a Pytorch-style pseudo-code for the $\gamma$-Lipschitz Batchnorm for clarity.

\begin{verbatim}
  bn = torch.nn.BatchNorm2d(in_channels, affine=False)
  log_lipschitz = torch.nn.Parameter(torch.tensor(init_lipschitz).log())
  ...
  # perform spectral normalization for BN in closed form
  bn_spectral_norm = torch.max(1 / (bn.running_var + 1e-5).sqrt()) 
  one_lipschitz_bn = bn(x) / bn_spectral_norm
  
  # multiply normalized BN with a learnable scale 
  scale = torch.min((bn.running_var + 1e-5) ** .5)
  one_lipschitz_part = bn(x) * scale
  x = one_lipschitz_part * torch.minimum(1 / scale, log_lipschitz.exp())
\end{verbatim}
Note that we parameterize $\gamma$ in terms of its log value, to ensure it remains positive during training. We employ the same method for fitting the centered softplus parameter, $\beta$.

\section{Proofs of LCNN Properties}

Here we present proofs for the properties linking LCNNs to gradient smoothness and adversarial robustness.
To this end, we first prove a lemma that is used in both these results, which states that the gradient 
norm can rise exponentially in the worst case. In the text below, $B_{\delta}(\X)$ denotes the 
$\delta$-ball around $\X$. 

\begin{lemma}\label{lemma:gradratio}
The ratio of gradient norms at nearby points rises exponentially with distance between those 
points. For a function $f$ satisfying $\max_{\X' \in B_{\delta}(\X)} \C_f(\X') = \delta_{\C}$, points $\X$ and $\X + \epsilon$ that are $\| \epsilon \|_2 = r$ away, we have 
\begin{align*}
\frac{\| \grad f(\X + \epsilon) \|_2}{\| \grad f(\X) \|_2} \leq \exp(r \delta_{\C})
\end{align*}

\begin{proof}
Let $g(\X) = \log \| \grad f(\X) \|^2_2$. Applying a first order Taylor series with Lagrange 
remainder / mean-value theorem, we have 

\begin{align*}
g(\X + \epsilon) - g(\X) &= \grad g(\X + \xi)^\top \epsilon ~~~~ &(\text{for some } \xi \in B_{\epsilon}(\X))\\
\log \frac{\| \grad f(\X + \epsilon) \|^2_2}{\| \grad f(\X) \|^2_2} &= 2 \frac{\grad f(\X + \xi)\grad^2 f(\X + \xi)^\top \epsilon}{\| \grad f(\X + \xi) \|^2_2} \\
& \leq 2 \frac{\| \grad f(\X + \xi) \|_2 \| \grad^2 f(\X + \xi)\|_2 \| \epsilon \|_2}{\| \grad f(\X + \xi) \|^2_2} ~~~~ &(\text{Cauchy-Schwartz inequality})\\
& \leq 2 r \delta_{\C}
\end{align*}

Note that to apply the Cauchy-Schwartz inequality, we first upper bound the term on the right by
its 2-norm, which for a scalar is simply its absolute value. Taking exponent on both sides of the 
final expression and taking square root, we have the intended result.

\end{proof}

\end{lemma}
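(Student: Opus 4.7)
The plan is to convert the multiplicative bound on $\| \grad f(\X + \epsilon) \|_2 / \| \grad f(\X) \|_2$ into an additive bound on a suitable scalar field, so that ordinary calculus (the mean value theorem) can be applied in a single step. A direct attack on the ratio would force one to track how the gradient norm evolves along the segment from $\X$ to $\X + \epsilon$ and accumulate multiplicative contributions, which is cumbersome. Passing to the logarithm replaces accumulation-as-product by accumulation-as-sum, which is exactly what a first-order Taylor bound can handle.

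Concretely, I would define $g(\X) := \log \| \grad f(\X) \|_2^2$, compute its gradient by the chain rule, and apply the mean value form of Taylor's theorem along the segment. The gradient works out to
\begin{align*}
\grad g(\X) = \frac{2\, \grad^2 f(\X)\, \grad f(\X)}{\| \grad f(\X) \|_2^2},
\end{align*}
and the mean value theorem yields $g(\X + \epsilon) - g(\X) = \grad g(\X + \xi)^\top \epsilon$ for some $\xi$ on the segment between $\X$ and $\X + \epsilon$. The point is that once I bound the numerator of $\grad g(\X + \xi)^\top \epsilon$ by Cauchy--Schwarz, a factor of $\| \grad f(\X + \xi) \|_2$ cancels against one in the denominator of $\grad g$, leaving exactly the combination $\| \grad^2 f \|_2 / \| \grad f \|_2 = \C_f$ up to the small $\varepsilon$ in the definition. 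The hypothesis that $\C_f \le \delta_\C$ uniformly on $B_\delta(\X)$ caps this at $2 \delta_\C \| \epsilon \|_2 = 2 r \delta_\C$, and exponentiating and taking a square root then gives the stated bound $\exp(r \delta_\C)$.

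The main obstacle I expect is the bookkeeping around well-definedness of $g$: the logarithm requires $\| \grad f \|_2$ to be strictly positive along the entire segment between $\X$ and $\X + \epsilon$, and the cancellation that produces $\C_f$ requires the same. The paper's curvature definition already includes a small additive $\varepsilon$ in the denominator to handle this; alternatively, one can restrict attention to inputs where gradients do not vanish (the robustness statements are vacuous otherwise). Modulo this caveat, every other step is a routine application of the chain rule, the mean value theorem, and Cauchy--Schwarz, and the conceptual content is simply recognizing that the normalized curvature is exactly the object controlling the logarithmic derivative of $\| \grad f \|_2^2$.
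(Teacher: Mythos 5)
Your proposal is correct and follows essentially the same route as the paper's proof: the same substitution $g(\X) = \log \| \grad f(\X) \|_2^2$, the same mean-value/Lagrange-remainder step, the same Cauchy--Schwarz cancellation yielding $2 r \delta_{\C}$, and the same exponentiate-and-take-square-root conclusion. Your remark about requiring $\| \grad f \|_2 > 0$ along the segment is a legitimate caveat that the paper's proof silently elides, but it does not change the argument.
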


\subsection{LCNNs have Robust Gradients}

\begin{proposition}
  Let $\max_{\X' \in B_{\epsilon}(\X)} \C_f(\X') \leq \delta_{\C}$,  then the relative distance between gradients at $\X$ and $\X + \epsilon$ is

\begin{align*}
  \dfrac{\| \grad f(\X + \epsilon) - \grad f(\X) \|^2_2}{\| \grad f(\X) \|^2_2 } 
    \leq r \delta_{\C} \exp(r \delta_{\C}) \sim r \C_f(\X) ~~~~(\text{Quadratic Approximation})
\end{align*}

\end{proposition}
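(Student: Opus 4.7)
The plan is to bound $\|\grad f(\X+\epsilon) - \grad f(\X)\|_2$ by an integral over the segment connecting $\X$ and $\X+\epsilon$, reduce the integrand via the defining identity of normalized curvature, and then invoke Lemma~\ref{lemma:gradratio} to control the variation of the gradient norm along that segment. The target inequality is essentially the ``differential'' companion to the earlier lemma, so the strategy is largely dictated by how that lemma was stated.

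First I would apply the fundamental theorem of calculus to the gradient map, writing $\grad f(\X + \epsilon) - \grad f(\X) = \int_0^1 \grad^2 f(\X + t \epsilon)\,\epsilon \, dt$. Taking $\ell_2$ norms, using the definition of spectral norm for the Hessian, and moving the norm inside the integral gives $\|\grad f(\X + \epsilon) - \grad f(\X)\|_2 \le r \int_0^1 \|\grad^2 f(\X + t \epsilon)\|_2 \, dt$, where $r := \|\epsilon\|_2$. I would then substitute the defining identity $\|\grad^2 f(\X')\|_2 = \C_f(\X') \|\grad f(\X')\|_2$ and use the hypothesis $\C_f(\X') \le \delta_{\C}$ uniformly on $B_\epsilon(\X)$ to obtain $\|\grad f(\X + \epsilon) - \grad f(\X)\|_2 \le r \delta_{\C} \int_0^1 \|\grad f(\X + t \epsilon)\|_2 \, dt$.

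Next I would apply Lemma~\ref{lemma:gradratio} pointwise along the segment. Since $\|(\X + t\epsilon) - \X\|_2 = t r \le r$, the lemma gives $\|\grad f(\X + t \epsilon)\|_2 \le \|\grad f(\X)\|_2 \exp(t r \delta_{\C}) \le \|\grad f(\X)\|_2 \exp(r \delta_{\C})$, so after substitution the integral is bounded by $\|\grad f(\X)\|_2 \exp(r \delta_{\C})$. Dividing both sides of the resulting inequality by $\|\grad f(\X)\|_2$ yields the claimed relative bound $r \delta_{\C} \exp(r \delta_{\C})$. The quadratic approximation $\sim r \, \C_f(\X)$ then follows by expanding $\exp(r \delta_{\C}) \approx 1$ to leading order and identifying $\delta_{\C}$ with $\C_f(\X)$ when the curvature is small.

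The main obstacle is really bookkeeping rather than any deep estimate: one must track carefully that (i) the hypothesis $\C_f(\X') \le \delta_{\C}$ is used over the entire neighborhood rather than at $\X$ alone, and (ii) the factor $\|\grad f(\X)\|_2$ produced by the Lemma is what cancels against the denominator on the left, so the choice to phrase the Lemma as a ratio is exactly what makes the cancellation clean. Given that the integral representation of $\grad f(\X+\epsilon)-\grad f(\X)$ and Lemma~\ref{lemma:gradratio} are both already in hand, no new analytic ingredient is needed.
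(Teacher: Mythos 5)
Your proof is correct and follows essentially the same route as the paper's: bound the gradient increment by a Hessian norm over the segment, convert the Hessian norm to curvature times gradient norm, and invoke Lemma~\ref{lemma:gradratio} to cancel against the denominator. The only difference is that you use the integral form of the Taylor remainder where the paper uses an exact Lagrange/mean-value form $\grad f(\X+\epsilon)-\grad f(\X)=\grad^2 f(\X+\xi)^\top\epsilon$; your version is marginally cleaner since that exact equality does not strictly hold for vector-valued maps, but the resulting norm bound is identical.
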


\begin{proof} We begin the proof by invoking the Taylor series approximation of $\grad f(\X +
\epsilon)$ at $\X$, and using the explicit Lagrange form of the Taylor error. This is equivalent
to using a form of the multivariate mean-value theorem. Let $\xi \in B_{\epsilon}(\X)$, then there
exists some $\xi$ such that the following holds

\begin{align*}
  \grad f(\X + \epsilon) - \grad f(\X) 
    & = \grad^2 f(\X+ \xi)^\top \epsilon ~~~~~ &(\text{Taylor Series})\\
  \| \grad f(\X + \epsilon) - \grad f(\X) \|_2  
    &\leq \| \grad^2 f(\X + \xi) \|_2 \| \epsilon \|_2 ~~~~~&(\text{Cauchy- Schwartz inequality})\\ 
  \frac{\| \grad f(\X + \epsilon) - \grad f(\X) \|_2}
  {\| \grad f(\X) \|_2 } 
    &\leq \frac{\C_f(\X + \xi_0) \| \grad f(\X + \xi) \|_2}{\| \grad f(\X) \|_2} r ~~~~~ &(\textit{Divide by gradnorm})
\end{align*}

Plugging in the value of $\| \grad f(\X + \xi_0) \|_2 / \| \grad f(\X) \|_2$ from Lemma \ref{lemma:gradratio}, and further upper bounding $\C_f(\X + \xi) \leq \delta_{\C}$ we have the intended result.

To derive the simplified quadratic approximation, replace the first step in the Taylor series with 
$\grad f(\X + \epsilon) - \grad f(\X)  = \grad^2 f(\X)^\top \epsilon$, i.e., use $\xi = 0$.
\end{proof}

\subsection{Curvature is Necessary for Robustness}

\begin{proposition}
  Let $\max_{\X' \in B_{\epsilon}(\X)} \C_f(\X') \leq \delta_{\C}$, then for two points $\X$ and $\X + \epsilon$,

  \begin{align*}
    \| f(\X + \epsilon) - f(\X) \|_2 
      \leq r \| \grad f \|_2 \left( 1 + 
          \frac{1}{2} r \delta_{\C}  \exp(r \delta_C) \right) 
    \sim r \| \grad f \|_2 \left( 1 + 
          \frac{1}{2} r \C_f(\X)  \right) ~~~~(\text{Quadratic Approximation})
  \end{align*}
\end{proposition}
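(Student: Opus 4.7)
The plan is to Taylor-expand $f$ around $\X$ to second order with an exact Lagrange remainder, and then convert the resulting Hessian term into an expression involving the gradient norm at $\X$ and the normalized curvature $\C_f$. This mirrors the structure of the proof of Proposition~1 (LCNNs have robust gradients) given earlier in the appendix, and will in fact reuse Lemma~\ref{lemma:gradratio} as a black box.

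First I would write
\begin{equation*}
f(\X + \epsilon) - f(\X) = \grad f(\X)^{\top} \epsilon + \tfrac{1}{2} \epsilon^{\top} \grad^2 f(\X + \xi)\, \epsilon
\end{equation*}
for some $\xi \in B_{\epsilon}(\X)$, by Taylor's theorem with Lagrange remainder (applied coordinate-wise in the output if $f$ is vector-valued). Taking the $\ell_2$-norm and applying Cauchy--Schwartz to each term gives
\begin{equation*}
\| f(\X+\epsilon) - f(\X) \|_2 \;\leq\; r\,\|\grad f(\X)\|_2 \;+\; \tfrac{1}{2}\, r^2\, \|\grad^2 f(\X + \xi)\|_2,
\end{equation*}
where $r = \|\epsilon\|_2$.

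The main obstacle is the second summand: it is naturally expressed in terms of the Hessian norm at the shifted point $\X + \xi$, but the statement requires a bound in terms of the \emph{gradient norm at} $\X$ and the normalized curvature. To bridge this, I would rewrite the Hessian norm as $\|\grad^2 f(\X+\xi)\|_2 = \C_f(\X+\xi)\,\|\grad f(\X+\xi)\|_2 \leq \delta_{\C}\,\|\grad f(\X+\xi)\|_2$, using the definition of normalized curvature together with the assumed uniform bound $\delta_{\C}$. Next I would invoke Lemma~\ref{lemma:gradratio} to trade $\|\grad f(\X+\xi)\|_2$ for $\|\grad f(\X)\|_2 \exp(r\delta_{\C})$, since $\xi$ lies within a ball of radius at most $r$ around $\X$.

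Combining these inequalities yields
\begin{equation*}
\| f(\X+\epsilon) - f(\X) \|_2 \;\leq\; r\,\|\grad f(\X)\|_2\left(1 + \tfrac{1}{2}\, r\, \delta_{\C}\, \exp(r\,\delta_{\C})\right),
\end{equation*}
which is the claimed bound. Finally, for the quadratic approximation I would redo the Taylor expansion choosing $\xi = 0$ (equivalently, treating $f$ as locally quadratic so that the remainder term uses the Hessian at $\X$ itself): then no Lemma~\ref{lemma:gradratio} step is needed, the exponential factor collapses to $1$, and $\delta_{\C}$ is replaced by the pointwise $\C_f(\X)$, giving the stated simplified bound. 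I expect no further subtleties: the only nontrivial input is Lemma~\ref{lemma:gradratio}, and the rest is routine application of Taylor's theorem and Cauchy--Schwartz.
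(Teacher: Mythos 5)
Your proposal is correct and follows essentially the same route as the paper's own proof: a second-order Taylor expansion with Lagrange remainder, Cauchy--Schwartz on both terms, conversion of the Hessian norm at $\X+\xi$ into $\C_f(\X+\xi)\,\|\grad f(\X+\xi)\|_2$, and then Lemma~\ref{lemma:gradratio} to pull the gradient norm back to the base point, with the quadratic approximation obtained by setting $\xi = 0$. No gaps to report.
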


\begin{proof} Let $\xi \in B_{\epsilon}(\X)$, then there exists some $\xi$ such that
  the following holds
  \begin{align*}
  f(\X + \epsilon) - f(\X) 
    &=\grad f(\X)^\top \epsilon + \frac{1}{2} \epsilon^\top \grad^2 f(\X + \xi) \epsilon ~~ &(\text{Taylor Series})\\
  \| f(\X + \epsilon) - f(\X) \|_2 
  &\leq \| \grad f(\X)^\top \epsilon \|_2 +  \frac{1}{2} \|\epsilon^\top \grad^2 f(\X + \xi) \epsilon \|_2 
   ~~ &(\text{triangle inequality}) \\
  &\leq \| \grad f(\X) \|_2 \| \epsilon \|_2 + \frac{1}{2} \lambda_{\max}(\X + \xi) \|\epsilon \|^2_2 
   ~~ &(\text{Cauchy-Schwartz inequality})\\
  &\leq \| \grad f(\X) \|_2 \| \epsilon \|_2   + \frac{1}{2} \C_f(\X + \xi) 
    \|\epsilon \|^2_2 \| \grad f(\X + \xi) \|_2 
   ~~ &(\text{Defn of } \C_f(\X + \xi)) 
  \end{align*}

Factoring out $\| \grad f(\X) \|_2 \| \epsilon \|_2$ in the RHS, and using Lemma \ref{lemma:gradratio}, 
and further upper bounding $\C_f(\X + \xi) \leq \delta_{\C}$ we have the intended result.

To derive the simplified quadratic approximation, replace the first step in the Taylor series with 
$ f(\X + \epsilon) - f(\X) =\grad f(\X)^\top \epsilon + \frac{1}{2} \epsilon^\top \grad^2 f(\X) \epsilon$, 
i.e., use $\xi = 0$.

  \end{proof}

\section{Experimental Settings}

In this section we elaborate on the hyper-parameter settings used for our tuning our models.
For the standard ResNet-18, we use standard hyper-parameter settings as indicated in the main paper, and we 
do not modify this for the other variants. For LCNNs, we chose regularizing constants as $\lambda_{\beta} = 10^{-4}$ and $\lambda_{\gamma} = 10^{-5}$.  For GradReg, we use $\lambda_{\textnormal{grad}} = 10^{-3}$, and for LCNNs + GradReg, we chose $\lambda_{\beta} = 10^{-4}$, $\lambda_{\gamma} = 10^{-5}$, $\lambda_{\textnormal{grad}} = 10^{-3}$. We performed a coarse grid search 
and chose the largest regularizing constants that did not affect predictive performance.

\section{Additional Experiments}

\subsection{Ablation Experiments} 
In this section, we present ablation studies where we train models with each of the proposed modifications separately, i.e., we train a model with only spectral norm for the convolution layers, $\gamma$-Lipschitz Batchnorm or centered softplus. Our results in Table \ref{table:ablations} show that for the resnet18 architecture considered,

\begin{table}[h]
  \begin{center}
    \caption{Ablation experiments to study effect of individual modifications to LCNN architectures. We find that while either using only centered softplus or $\gamma$-BN suffices in practice to minimize curvature, while spectral norm on the convolutional layers (which is the most expensive modification) may not be necessary.}
    \label{table:ablations}
    
    \begin{tabular}{c | c | c | c | c} 
 \textbf{Model} &    $\E_{\X} \C_f(\X)$     & $\E_{\X} \| \grad^2 f(\X) \|_2$ & $\E_{\X} \| \grad f(\X) \|_2$ & \textbf{Accuracy (\%)} \\
\midrule 
 ConvSpectralNorm only 	& 358.86 	  & 8380.55   & 23.92     & 77.55  \\
$\gamma$-BN only        & 65.78           & 1086.95   & 20.86     & 77.33  \\
c-Softplus only 	& 57.49 	  & 734.05    & 16.99     & 77.31  \\
\midrule
Standard & 270.89  & 6061.96 & 19.66  & 77.42 \\
LCNN & 69.40           & 1143.62   & 22.04     & 77.30  \\
    \end{tabular}
  \end{center}
  \end{table}

(1) performing spectral normalization had no effect of curvature as presumably the batchnorm layers are able to compensate for the lost flexibility, and 

(2) either penalizing the batchnorm alone or the softplus alone performs almost as well as LCNN, or sometimes even better in terms of curvature reduction. Note that the most expensive computational step is the spectral norm for the convolutional layers, indicating that avoiding this in practice may yield speedups in practice.

While in practice for Resnet-18 only $\gamma$-Lipschitz or centered softplus is sufficient for curvature reduction, in theory we must regularize all components to avoid scaling restrictions in one layer being compensated by other layers, as dictated by the upper bound. In particular, this means that the strategy of penalizing only a subset of layers may not generalize to other architectures.

\subsection{Robustness Evaluation on RobustBench / Autoattack}

The attack presented in \autoref{sec:expts} was relatively ``weak'' -- a network could be truly susceptible to adversarial attack, but by only testing against a weak adversary, we could fail to notice. Short of a comprehensive verification (e.g. \cite{Katz2017}), which is known to be computationally intractable at scale, there is no fully satisfactory way to guarantee robustness. However, one common method to develop confidence in a model is to demonstrate robustness in the face of a standard set of nontrivially capable attackers. This is what we do in Table \ref{table:adv_robustness_robustbench}, where we use the Robustbench software library \cite{Croce2021} to evaluate both a white-box (having access to the internal details of the model), and a black-box (using only function evaluations) attacks. 

\begin{table}[h]
  \begin{center}
    \caption{Adversarial accuracy to standard attacks accessed via Robustbench \cite{Croce2021}. ``APGD-t'' refers to the white box targetted auto PGD attack from \cite{Croce2020}, ``Square'' refers to the black-box square attack from \cite{Andriushchenko2019}.}
    \label{table:adv_robustness_robustbench}
    \begin{tabular}{c | c | c | c} 
\textbf{Model}                  	      & \textbf{APGD-t Acc. (\%)}  &  \textbf{Square Acc. (\%)} & \textbf{Clean Acc. (\%)} \\
\midrule  
Standard          	      &22.122	      & 52.874	     & 76.721    \\ 
LCNN              	      &23.709	      & 52.179	     & 76.602    \\ 
GradReg                       &50.294	     & 64.678	     & 76.394    \\ 
LCNN+GradReg 	              &52.477	     & 64.678	     & 76.622   \\ 
\midrule                                                              
CURE 	                      &50.096	      & 63.488	     & 75.928    \\ 
Softplus+wt decay 	      &23.907	      & 53.766	     & 76.622   \\ 
Adv Training                  &55.155	      & 66.861	    & 75.521    \\ 
\midrule                                                                 
CURE + GradReg 	              &60.810	      & 67.357	     & 74.192   \\ 
LCNN + GradReg + Adv Training &59.222	      & 66.861	\    & 75.382    \\ 

    \end{tabular}
  \end{center}
  \end{table}

These experimental results show overall that:

(1) LCNN + GradReg is still on par with adversarial training even against stronger attacks such as APGD-t and Square, as they are with PGD.

(2) Combining LCNN + GradReg with adversarial training (in the last row) further improves robustness at the cost of predictive accuracy.

(3) Combining CURE with GradReg (in the penultimate row) improves robustness at the cost of further deteriorating predictive accuracy.

\subsection{Additional Evaluations on more Architectures / Dataset Combinations}

We present results on the following dataset - architecture pairs: 

(1) In Table \ref{tab:geometry1}, we present results on SVHN dataset and VGG-11 model

(2) In Table \ref{tab:geometry2}, we present results on SVHN dataset and ResNet-18 model

(3) In Table \ref{tab:geometry3}, we present results on CIFAR-100 dataset and VGG-11 model

In all cases, we find that our results are on par with our experiment done on the CIFAR-100 and ResNet-18 setup, which confirms that generality of our approach.

\begin{table}[h]
  \begin{center}
    \caption{Model geometry of VGG-11 models trained on the SVHN test dataset.}
  \label{tab:geometry1}
    \begin{tabular}{c | c | c | c | c} 
     \textbf{Model} & $\E_{\X} \| \grad f(\X) \|_2$ & $\E_{\X} \| \grad^2 f(\X) \|_2$ & $\E_{\X} \C_f(\X)$ & Accuracy (\%) \\ 
     \midrule

     Standard & 2.87 & 158.29 & 54.24 & 96.01   \\ 
     LCNNs & 4.04 & 83.34 & 30.05 & 95.61 \\
     GradReg \cite{Drucker1992} & 1.85 & 57.52 & 33.34 & 96.03  \\ 
     LCNNs + GradReg & 2.02 & 25.54 & 17.06 & 96.23  \\
     \midrule
     Adversarial Training \cite{Madry2017} & 1.25 & 27.64 & 24.23 & 96.37\\ 
    \end{tabular}

  \end{center}
  \end{table}

\begin{table}[h]
  \begin{center}
    \caption{Model geometry of Resnet-18 models trained on the SVHN test dataset.}
  \label{tab:geometry2}
    \begin{tabular}{c | c | c | c | c} 
     \textbf{Model} & $\E_{\X} \| \grad f(\X) \|_2$ & $\E_{\X} \| \grad^2 f(\X) \|_2$ & $\E_{\X} \C_f(\X)$ & Accuracy (\%) \\ 
     \midrule

     Standard & 2.64 & 204.38 & 78.22 & 96.41   \\ 
     LCNNs & 2.91 & 77.78 & 25.36 & 96.35 \\
     GradReg \cite{Drucker1992} & 1.63 & 68.22 & 39.55 & 96.57  \\ 
     LCNNs + GradReg & 1.69 & 31.69 & 15.28 & 96.53  \\
     \midrule
     Adversarial Training \cite{Madry2017} & 1.05 & 22.96 & 24.48 & 96.64\\ 
    \end{tabular}
  \end{center}
  \end{table}

\begin{table}[h]
  \begin{center}
    \caption{Model geometry of VGG-11 models trained on the CIFAR-100 test dataset.}
  \label{tab:geometry3}
    \begin{tabular}{c | c | c | c | c} 
     \textbf{Model} & $\E_{\X} \| \grad f(\X) \|_2$ & $\E_{\X} \| \grad^2 f(\X) \|_2$ & $\E_{\X} \C_f(\X)$ & Accuracy (\%) \\ 
     \midrule

     Standard & 17.07 & 1482.16 & 85.81 & 73.33   \\ 
     LCNNs & 15.88 & 282.06 & 41.14 & 73.76 \\
     GradReg \cite{Drucker1992} & 10.64 & 534.71 & 48.26 & 72.65  \\ 
     LCNNs + GradReg & 9.81 & 105.07 & 24.48 & 73.01  \\
     \midrule
     Adversarial Training \cite{Madry2017} & 6.20 & 166.73 & 27.37 & 71.13\\ 
    \end{tabular}

  \end{center}
  \end{table}

\end{document}